\documentclass{article}
\usepackage[preprint]{defines/neurips2026/neurips_2026}

\usepackage{silence}
\usepackage{microtype}
\usepackage{graphicx}
\usepackage{subcaption}
\usepackage{caption}
\usepackage{media9}
\usepackage{booktabs}
\usepackage{xspace}

\usepackage{hyperref}
\usepackage{float}

\usepackage{capt-of}

\usepackage{amsmath}
\usepackage{amssymb}
\usepackage{mathtools}
\usepackage{amsthm}

\usepackage[capitalize,noabbrev]{cleveref} 

\theoremstyle{plain}
\newtheorem{theorem}{Theorem}[section]

\theoremstyle{definition}
\newtheorem{definition}[theorem]{Definition}

\theoremstyle{remark}
\newtheorem{remark}[theorem]{Remark}
\usepackage{etoolbox}
\usepackage{listings}
\usepackage{xcolor}

\usepackage[utf8]{inputenc} 
\usepackage[T1]{fontenc}    

\usepackage{amsfonts}       
\usepackage{nicefrac}       

\usepackage[textsize=tiny]{todonotes}

\usepackage{xparse}
\usepackage{etoolbox}
\usepackage{titlesec}

\newcommand{\tokenq}{\textbf{q}}
\newcommand{\tokenk}{\textbf{k}}
\newcommand{\attnscore}{\textbf{s}}
\DeclareRobustCommand{\namedop}[1]{\ifmmode\operatorname{\mathbf{#1}}\else\textup{\textbf{#1}}\fi}
\DeclareRobustCommand{\TIE}{\namedop{TIE}\xspace}

\DeclareRobustCommand{\rope}{\namedop{RoPE}\xspace}
\DeclareRobustCommand{\lmrope}{\namedop{RoTE}\xspace}
\DeclareRobustCommand{\dote}{\namedop{DoTE}\xspace}
\DeclareRobustCommand{\nvedtw}{\ifmmode\mathrm{nV\text{-}EDTW}\else\textup{nV-EDTW}\fi}
\DeclareRobustCommand{\nsedtw}{\ifmmode\mathrm{nS\text{-}EDTW}\else\textup{nS-EDTW}\fi}

\newcommand{\z}{\textbf{z}}

\newcommand{\R}{\textbf{R}}

\newcommand{\A}{\mathbf{A}}
\newcommand{\E}{\mathbb{E}}

\newcommand{\Uniform}{\mathcal U}
\newcommand{\diff}{\text{d}}
\newcommand{\Reals}{\mathbb R}
\newcommand{\omnievent}{\textit{OmniEvents}}

\DeclareMathOperator{\sinc}{sinc}

\newtoggle{isMainText}
\newtoggle{isAppendix}
\newtoggle{isSupplementary}

\lstdefinestyle{pythonstyle}{
    language=Python,
    basicstyle=\ttfamily\small,
    keywordstyle=\color{blue},
    commentstyle=\color{green!60!black},
    stringstyle=\color{orange},
    numbers=left,
    numberstyle=\tiny\color{gray},
    breaklines=true,
    frame=single
}

\colorlet{punct}{red!60!black}
\definecolor{background}{HTML}{EEEEEE}
\definecolor{delim}{RGB}{20,105,176}
\colorlet{numb}{magenta!60!black}

\lstdefinelanguage{jsonformat}{
    basicstyle=\normalfont\ttfamily,
    numbers=left,
    numberstyle=\scriptsize,
    stepnumber=1,
    numbersep=8pt,
    showstringspaces=false,
    breaklines=true,
    frame=lines,
    backgroundcolor=\color{background},
    literate=
     *{0}{{{\color{numb}0}}}{1}
      {1}{{{\color{numb}1}}}{1}
      {2}{{{\color{numb}2}}}{1}
      {3}{{{\color{numb}3}}}{1}
      {4}{{{\color{numb}4}}}{1}
      {5}{{{\color{numb}5}}}{1}
      {6}{{{\color{numb}6}}}{1}
      {7}{{{\color{numb}7}}}{1}
      {8}{{{\color{numb}8}}}{1}
      {9}{{{\color{numb}9}}}{1}
      {:}{{{\color{punct}{:}}}}{1}
      {,}{{{\color{punct}{,}}}}{1}
      {\{}{{{\color{delim}{\{}}}}{1}
      {\}}{{{\color{delim}{\}}}}}{1}
      {[}{{{\color{delim}{[}}}}{1}
      {]}{{{\color{delim}{]}}}}{1},
}

\newcommand{\setVersion}[1]{
    \togglefalse{isMainText}
    \togglefalse{isAppendix}
    \togglefalse{isSupplementary}
    
    \ifstrequal{#1}{maintext}{\toggletrue{isMainText}}{}
    
    \ifstrequal{#1}{appendix}{\toggletrue{isAppendix}}{}
    
    \ifstrequal{#1}{main+appendix}{\toggletrue{isMainText}\toggletrue{isAppendix}}{}
    
    \ifstrequal{#1}{supplementary}{\toggletrue{isSupplementary}}{}
    
    \typeout{*** Compiling version: #1 ***}
}

\titleformat{\paragraph}[runin]
  {\normalfont\normalsize\bfseries}
  {}
  {0pt}
  {}

\titlespacing*{\paragraph}
  {0pt}
  {0.3ex plus 0.1ex minus 0.1ex}
  {0.5em}

\usepackage[bigfiles]{pdfbase}
\ExplSyntaxOn
\NewDocumentCommand\embedvideos{smm}{
  \group_begin:
  \leavevmode
  \tl_if_exist:cTF{file_\file_mdfive_hash:n{#3}}{
    \tl_set_eq:Nc\video{file_\file_mdfive_hash:n{#3}}
  }{
    \IfFileExists{#3}{}{\GenericError{}{File~`#3'~not~found}{}{}}
    \pbs_pdfobj:nnn{}{fstream}{{}{#3}}
    \pbs_pdfobj:nnn{}{dict}{
      /Type/Filespec/F~(#3)/UF~(#3)
      /EF~<</F~\pbs_pdflastobj:>>
    }
    \tl_set:Nx\video{\pbs_pdflastobj:}
    \tl_gset_eq:cN{file_\file_mdfive_hash:n{#3}}\video
  }
  \pbs_pdfobj:nnn{}{dict}{
    /Type/RichMediaInstance/Subtype/Video
    /Asset~\video
    /Params~<</FlashVars (
      source=#3&
      skin=SkinOverAllNoFullNoCaption.swf&
      skinAutoHide=true&
      skinBackgroundColor=0x5F5F5F&
      skinBackgroundAlpha=0
      autoRewind=true
    )>>
  }
  \pbs_pdfobj:nnn{}{dict}{
    /Type/RichMediaConfiguration/Subtype/Video
    /Instances~[\pbs_pdflastobj:]
  }
  \pbs_pdfobj:nnn{}{dict}{
    /Type/RichMediaContent
    /Assets~<<
      /Names~[(#3)~\video]
    >>
    /Configurations~[\pbs_pdflastobj:]
  }
  \tl_set:Nx\rmcontent{\pbs_pdflastobj:}
  \pbs_pdfobj:nnn{}{dict}{
    /Activation~<<
      /Condition/\IfBooleanTF{#1}{PV}{XA}
      /Presentation~<</Style/Embedded>>
    >>
    /Deactivation~<</Condition/PI>>
  }
  \hbox_set:Nn\l_tmpa_box{#2}
  \tl_set:Nx\l_box_wd_tl{\dim_use:N\box_wd:N\l_tmpa_box}
  \tl_set:Nx\l_box_ht_tl{\dim_use:N\box_ht:N\l_tmpa_box}
  \tl_set:Nx\l_box_dp_tl{\dim_use:N\box_dp:N\l_tmpa_box}
  \pbs_pdfxform:nnnnn{1}{1}{}{}{\l_tmpa_box}
  \pbs_pdfannot:nnnn{\l_box_wd_tl}{\l_box_ht_tl}{\l_box_dp_tl}{
    /Subtype/RichMedia
    /BS~<</W~0/S/S>>
    /Contents~(embedded~video~file:#3)
    /NM~(rma:#3)
    /AP~<</N~\pbs_pdflastxform:>>
    /RichMediaSettings~\pbs_pdflastobj:
    /RichMediaContent~\rmcontent
  }
  \phantom{#2}
  \group_end:
}
\ExplSyntaxOff

\newcommand{\embedvideo}[3]{\embedvideos{\includegraphics[width=#3]{#2}}{#1}}

\newcommand{\AuthorList}{}
\newtoggle{firstauthor}
\toggletrue{firstauthor}

\NewDocumentCommand{\Author}{m m O{}}{

    \iftoggle{firstauthor}
        {\togglefalse{firstauthor}}
        {\gappto{\AuthorList}{, }}

    \ifstrempty{#3}
        {\gappto{\AuthorList}{\mbox{#1$^{#2}$}}}
        {\gappto{\AuthorList}{\mbox{#1$^{#2,#3}$}}}
}

\newcommand{\AffiliationList}{}
\newtoggle{firstaffil}
\toggletrue{firstaffil}

\newcommand{\Affiliation}[2]{

    \iftoggle{firstaffil}
        {\togglefalse{firstaffil}}
        {\gappto{\AffiliationList}{, }}

    \gappto{\AffiliationList}{\mbox{$^{#1}$#2}}
}

\newcommand{\SpecialList}{\scriptsize}

\newcommand{\Special}[2]{
    \gappto{\SpecialList}{$^{#1}$#2\quad}
}

\newcommand{\AuthorListWrapper}{\parbox{0.96\textwidth}{\centering\AuthorList}}
\newcommand{\AffiliationListWrapper}{\parbox{0.9\textwidth}{\centering\AffiliationList}}
\newcommand{\SpecialListWrapper}{\parbox{0.75\textwidth}{\centering\SpecialList}}
\title{\TIE: Time Interval Encoding for Video Generation over Events}

\Author{Zhilei Shu}{1,2}[*]
\Author{Shangwen Zhu}{2,3}[*]
\Author{Zihang Liang}{6}[]
\Author{Xiaofan Li}{2}[]
\Author{Qianyu Peng}{8}[]
\Author{Xinyu Cui}{2}[]
\Author{Bo Ye}{2}[]
\Author{Yiming Li}{4}[]
\Author{Fan Cheng}{3}[]
\Author{Jian Zhao}{7}[]
\Author{Yang Cao}{1}[]
\Author{Zheng-Jun Zha}{1}[\dagger]
\Author{Ruili Feng}{5,2}[*]

\Affiliation{1}{University of Science and Technology of China}
\Affiliation{2}{Matrix Team}
\Affiliation{3}{Shanghai Jiao Tong University}
\Affiliation{4}{Nanyang Technological University}
\Affiliation{5}{University of Waterloo}
\Affiliation{6}{The Pennsylvania State University}
\Affiliation{7}{Zhongguancun Academy}
\Affiliation{8}{The University of Hong Kong}

\Special{*}{Equal Contribution}
\Special{\dagger}{Corresponding Author}

\author{
  \AuthorListWrapper 
  \\[1.5em]
  \AffiliationListWrapper 
  \\[0.3em]
  \SpecialListWrapper
  \\[0.2em]
  \url{https://matrixteam-ai.github.io/pages/TIE}
}

\setVersion{main+appendix}
\begin{document}
\maketitle
\iftoggle{isMainText}{
    
\begin{figure*}[h]
    \centering
    \includegraphics[width=\textwidth]{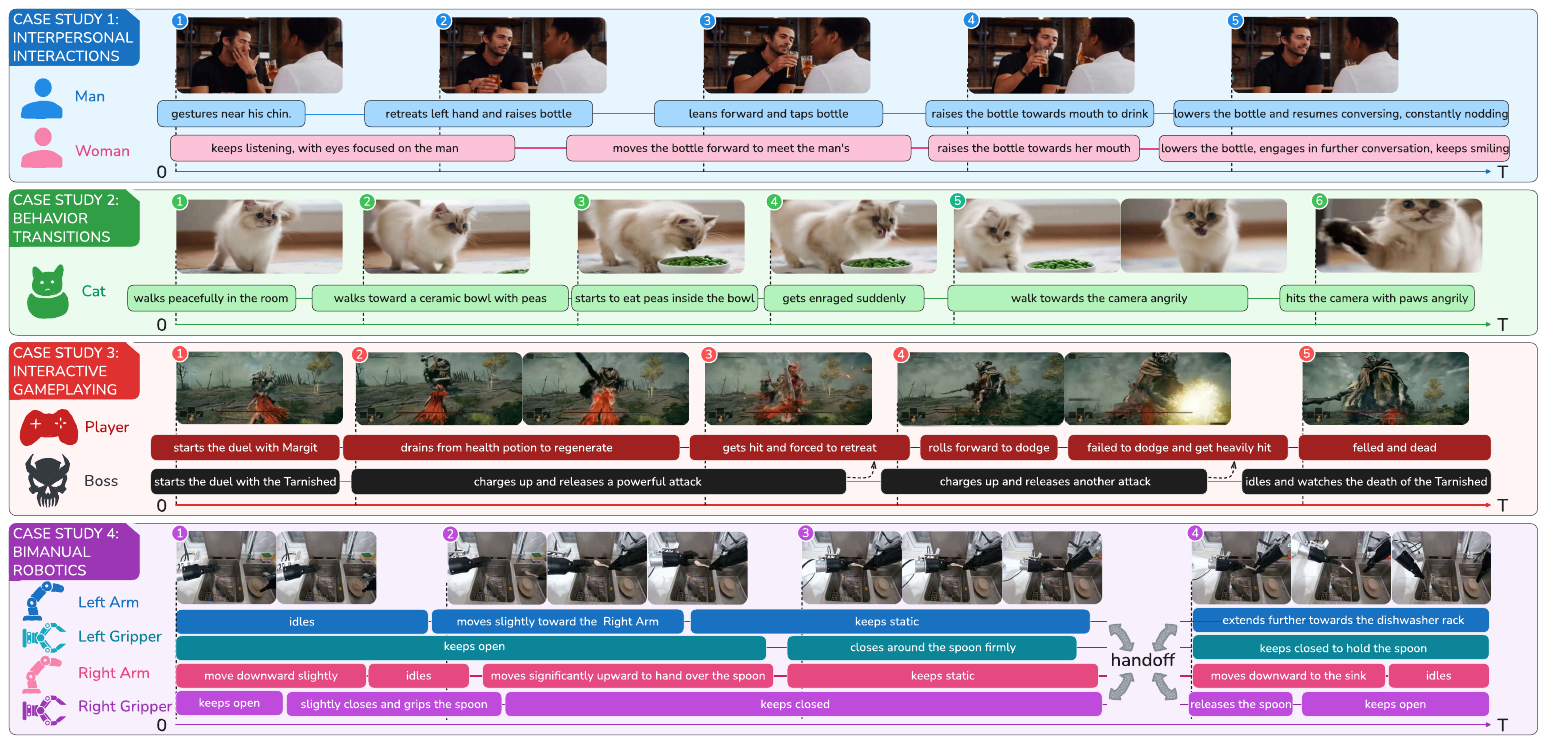}
    \caption{Generation results with \textbf{TIE}.
    By leveraging \textbf{Time Interval Encoding (\TIE)}, the model generates videos from structured event descriptions with explicit temporal boundaries.
    The examples demonstrate accurate event alignment under \textbf{concurrent} and \textbf{overlapping} event settings inaccessible to single-active-prompt baselines, including multi-entity dynamics and interleaved interactions.}
    \label{fig:teaser}
\end{figure*}

\begin{abstract}
Director-style prompting, robotic action prediction, and interactive video agents
demand temporal grounding over concurrent events---a regime in which 68\% of
general clips and over 99\% of robotics/gameplay clips contain overlapping events,
yet existing multi-event generators rest on a single-active-prompt assumption.
However, modern video generators, such as Diffusion Transformers (DiT), represent
time as discrete points through point-wise positional encodings. This formulation
creates a fundamental dimension mismatch: temporally extended intervals and
overlapping events are mathematically unrepresentable to the attention mechanism.
Consequently, event structures collapse into ambiguous token sequences---a
structural limitation that scaling alone does not directly address.
In this paper, we propose \textbf{Time Interval Encoding (\textbf{TIE})}, a
principled, plug-and-play interval-aware generalization of rotary embeddings that
elevates time intervals to first-class primitives inside DiT cross-attention.
Rather than introducing another heuristic interval embedding, we show that, within
RoPE-compatible bilinear attention, \TIE is characterized by two basic principles:
\textit{Temporal Integrability}, which requires an event to aggregate positional
evidence over its full duration, and \textit{Duration Invariance}, which removes
the trivial bias toward longer intervals. Under a uniform kernel, this
characterization yields an efficient closed-form \textit{sinc}-based solution
that preserves the standard attention interface and naturally attenuates boundary
noise through interval integration.
Empirically, \TIE preserves the visual quality of the base DiT model while
substantially improving temporal controllability. In our experiments on the
\omnievent~dataset, it improves human-verified Temporal Constraint Satisfaction
Rate from 77.34\% to 96.03\% and reduces temporal boundary error from 0.261s to
0.073s, while also improving trajectory-level temporal alignment metrics. \TIE
also remains effective under noisy timestamp perturbations, a key requirement for
large-scale event-conditioned training where interval annotations are often
automatic and imperfect, and reaches comparable visual and temporal-control
quality in fewer training steps. The code and dataset are available at \url{https://github.com/MatrixTeam-AI/TIE}.
\end{abstract}
    \section{Introduction}
The landscape of video generation has undergone a paradigm shift, with Diffusion Transformers (DiT) pushing the boundaries of visual fidelity \cite{peebles2023scalable,wan2025,lu2024vdt,ma2025latte,yang2025cogvideox,singer2023makeavideo,blattmann2023alignlatents} and creative customizability \cite{chen2023controlavideo, vace,wang2023videocomposer, guo2024animatediff, yang2024directavideo}. Beyond artistic content creation, these models are increasingly envisioned as \textbf{world simulators} \cite{hu2023gaia1} for robotics \cite{nvidia2025cosmosworldfoundationmodel} and interactive agents \cite{yang2024unisim,bruce2024genie,du2023unipi,valevski2024gamengen}. In robotics research, video-generation-based data synthesis \cite{zhou2024robodreamer} has emerged as a powerful tool for policy pre-training and data enrichment \cite{huang2025enerverse,zhu2025irasim}. However, the transition from ``visually pleasing'' to ``functionally useful'' videos requires a level of temporal precision that remains elusive. For applications such as bimanual robot manipulation or multi-athlete sports analysis \cite{li2021multisports}, a model must strictly adhere to the \textit{precise temporal boundaries}, ordering, and overlapping of multiple interacting events.

The fundamental bottleneck in achieving such precision is a \textbf{representation deficit} in current temporal positional encodings. While videos are continuous spatiotemporal signals, modern DiTs predominantly treat time as a sequence of discrete \textit{points}. Standard rotary positional encodings (\textbf{RoPE}), originally designed for text, are point-wise by nature \cite{su2024roformer}. This formulation creates a structural dimension mismatch: while real-world events (e.g., ``a robot arm gripping a tool'') occupy continuous \textbf{time intervals}, the model processes them as if they were tied to infinitesimal timestamps. Consequently, event structures collapse into ambiguous token sequences, leading to temporal drifting and the failure of concurrent event coordination---a limitation that cannot be resolved by scaling model size alone. 
Moreover, in large-scale real-world video corpora, event intervals are rarely available with frame-perfect boundaries: they are often produced by VLMs, action detectors, or other automatic annotators, whose start and end times are inevitably noisy. 
Thus, a useful interval representation must not only encode temporal support, but also remain stable under imperfect boundary annotations.

Existing works on timeline-guided synthesis \cite{villegas2022phenaki,oh2024mevg,wu2025mint,lin2024videodirectorgpt,qiu2024freenoise,cai2025ditctrl,yin2023nuwaxl} often assume a ``single-active-prompt'' constraint, where only one text prompt is valid at any given timestamp. This is insufficient for complex physical interactions where events are frequently \textbf{interleaved or staggered}. For instance, in a bimanual assembly task, the ``reaching'' phase of one arm must be precisely coordinated with the ``holding'' phase of the other. To bridge this gap, video generation models require an encoding mechanism that elevates time intervals to first-class primitives, allowing for a principled handling of event duration and concurrency.

This paper introduces \textbf{Time Interval Encoding (TIE)}, a novel interval-aware formulation that explicitly models event durations and boundaries within the cross-attention mechanism. Our starting point is two basic principles: \textit{Temporal Integrability}, which requires an event’s contribution to aggregate positional evidence over its full duration, and \textit{Duration Invariance}, which prevents matching strength from growing trivially with interval length. We show that, within RoPE-compatible bilinear attention, these two principles characterize \TIE rather than merely motivate it. Under the uniform kernel, the two principles lead to a closed-form sinc-modulated encoder (\lmrope) that incurs zero runtime overhead over standard \textbf{RoPE} and reduces to \textbf{RoPE} in the point-wise limit $r\rightarrow0$. 
The same interval-integrated form also acts as a temporal low-pass filter: boundary perturbations affect only the marginal portion of the event support, making \lmrope naturally robust to noisy start and end times.

To validate \TIE in practice, we construct \omnievent, a structured event-prompt dataset consisting of three parts: 1). \textit{PexelsEvents} with 250k clips of general case video, 2). \textit{RoboticsEvents} with 86k clips of task-specific robotics, and 3). \textit{GameEvents} with 80k clips of gameplay videos collected from EldenRing. We integrate \TIE into Wan2.2-5B-TI2V and evaluate it on \omnievent. Our experiments are organized around four questions: whether \TIE remains compatible with existing DiT generators, whether it improves time interval controllability, whether it remains robust under noisy interval boundaries for large-scale real-world training, and whether the gain truly comes from interval-aware encoding. Empirically, \TIE preserves standard visual quality, substantially improves human-verified temporal grounding (raising TCSR from 77.34\% to 96.03\% and reducing temporal boundary error from 0.261s to 0.073s), improves trajectory-level temporal alignment metrics, remains effective under noisy timestamp perturbations, and reaches comparable visual and temporal-control quality in fewer training steps. Qualitative case studies on \textit{EldenRing} further show localized editing of future interactions without corrupting earlier events. (Video examples can be found in \Cref{fig:vid_preview})

Our contributions are summarized as follows:
\begin{itemize}
    \item We propose Time Interval Encoding (\TIE) for the concurrent-event regime that single-active-prompt methods cannot represent. Two principles—Temporal Integrability and Duration Invariance—uniquely determine a closed-form sinc-modulated encoder (\lmrope) with zero overhead over \textbf{RoPE}.
    \item We establish a structured-prompt evaluation setting for interval-conditioned video generation, including the \omnievent~dataset and human-verified temporal-grounding metrics such as TCSR and trajectory-level metrics.
    \item We show that \TIE is robust to noisy temporal boundaries, a core requirement for large-scale event-conditioned pretraining where interval annotations are often produced by imperfect automatic annotators.
    \item We show empirically that \TIE preserves base-model visual quality, substantially improves human-verified and trajectory-level temporal grounding, accelerates the convergence of temporal grounding, and supports controllable multi-subject interactions.
\end{itemize}

\begin{figure}
    \centering
    \embedvideo{assets/vids/example1.mp4}{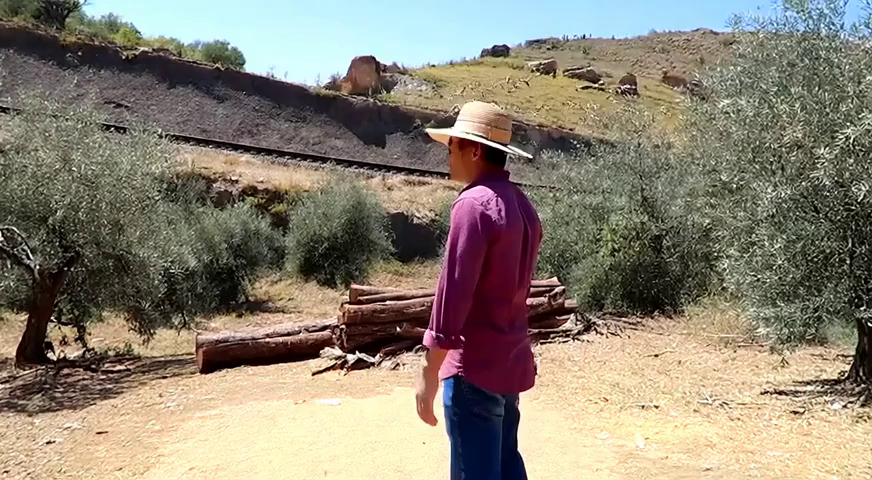}{0.19\linewidth}
    \embedvideo{assets/vids/example2.mp4}{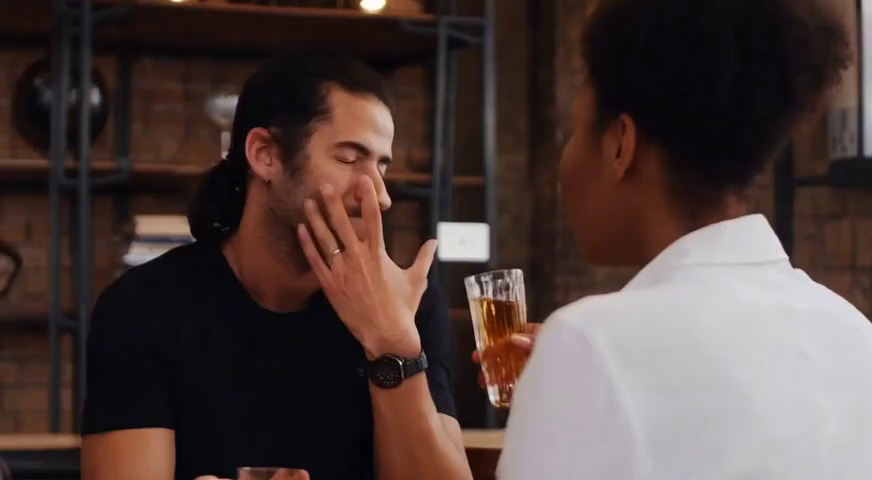}{0.19\linewidth}
    \embedvideo{assets/vids/long_robotic_task.mp4}{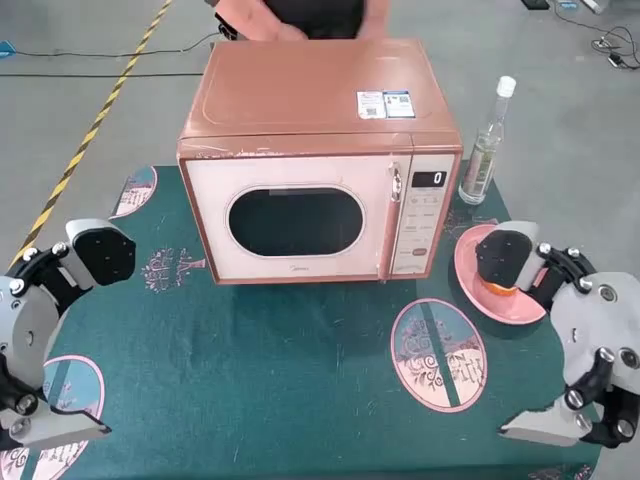}{0.13933\linewidth}
    \embedvideo{assets/vids/example4.mp4}{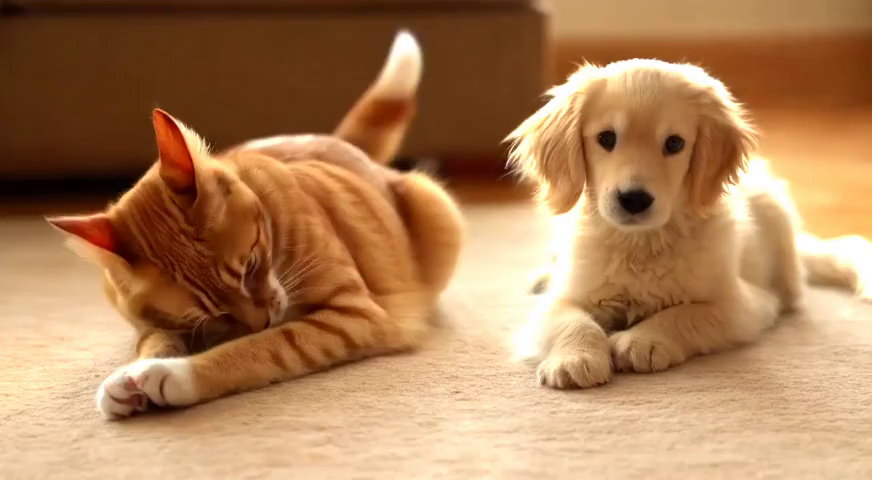}{0.19\linewidth}
    \embedvideo{assets/vids/example5.mp4}{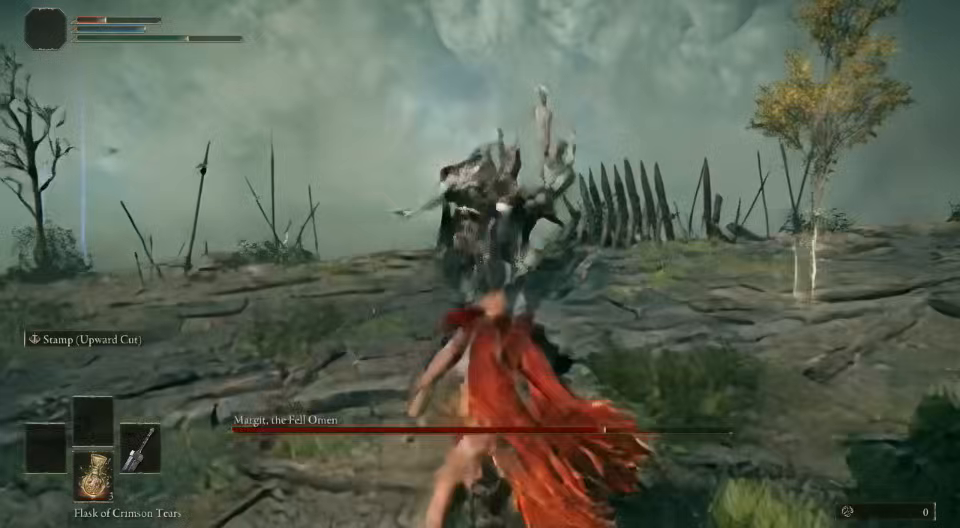}{0.19\linewidth}
    \vspace{-6pt}
    \caption{Video examples: \textit{Best viewed with Acrobat
Reader (Also works on Foxit PDF Reader / PDF-XChange Editor). Click the images to play the animation clips.}}
    \label{fig:vid_preview}
    \vspace{-10pt}
\end{figure}

    \section{Related Work}

Multi-event video generation conditions the synthesis on a sequence of event-specific prompts. Across paradigms, we observe that all existing methods share a single structural assumption — the single-active-prompt constraint: at most one event prompt is valid at any given timestamp. Sequential-token approaches such as Phenaki~\cite{villegas2022phenaki} translate disjoint prompts into a continuous token stream; diffusion-based extenders Gen-L-Video~\cite{wang2023gen} and MEVG~\cite{oh2024mevg} denoise non-overlapping local windows; multi-agent systems DreamFactory~\cite{xie2024dreamfactory} and VGoT~\cite{zheng2024videogen} structure transitions between sequentially scheduled events. Recent temporal-encoding methods inherit the same constraint: MinT~\cite{wu2025mint} aligns multi-event features through positional embeddings indexed at scalar timestamps, TS-Attn~\cite{zhang2026tsattntemporalwiseseparableattention} applies separable mask-based attention over disjoint prompt-time slots, and LongLive~\cite{yang2025longlive} preserves single-prompt KV state via KV-recache. In all of the above, each event is bound to a temporal point or a disjoint slot, so concurrent and overlapping events are inexpressible by construction. Yet physically realistic interactions—bimanual manipulation, multi-subject sports, adversarial combat—are inherently concurrent: in \omnievent, two or more events overlap in 68\% of general clips and over 99\% of robotics and gameplay clips. Closing this gap requires elevating the time interval itself to a first-class primitive, rather than retrofitting concurrency onto sequential schedules.

    \section{Method}\label{sec:method}

\begin{figure*}[ht]
    \centering
    \includegraphics[width=\textwidth]{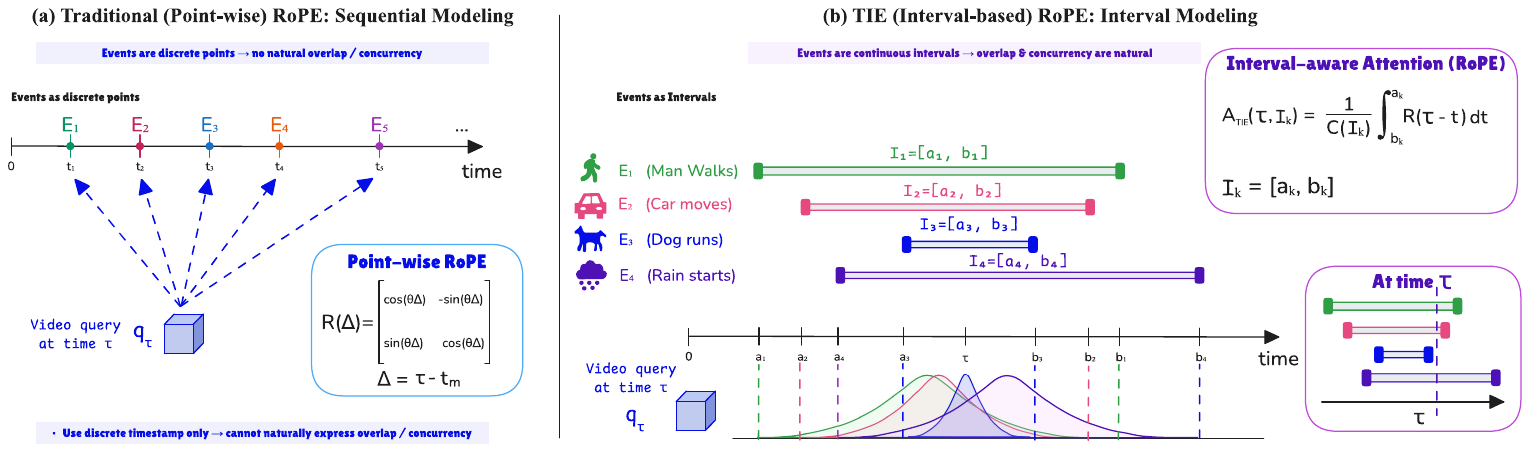}
    \caption{Comparison between normal RoPE (Point-wise) and \TIE~(Interval-based): The point-wise RoPE cannot naturally model the point-to-interval event activation. While our \TIE supports natural modeling of time interval.}
    \label{fig:tie_visualize}
\end{figure*}

\subsection{Time Interval Encoding}\label{sec: RoTE}
To bridge point-wise video features and interval-based textual events, we propose \textbf{Time Interval Encoding (\textbf{TIE})}. We treat a video token as a temporal point with midpoint timestamp $m_i\in\Reals$, and a textual event token as a time interval $I_j=[t_j^s,t_j^e]$ (find visualization explanation in \ref{fig:tie_visualize}). The goal is to extend \rope from point-to-point matching to point-to-interval matching while preserving the standard dot-product attention interface.

\paragraph{\rope Review.}
For a token $\z\in\Reals^d$ at time $t$, standard \rope applies $\mathrm{RoPE}(\z,t)=\R_{t}\z$,
where
\begin{equation}
    \R_{t}=\operatorname{diag}(\A_{1,t},\A_{2,t},\dots,\A_{{d/2},t}),
    \qquad
    \A_{i,t}=
    \begin{bmatrix}
        \cos(\theta_i t) & -\sin(\theta_i t) \\
        \sin(\theta_i t) & \cos(\theta_i t)
    \end{bmatrix}.
\end{equation}
Its key property is
\begin{equation}
    s_{\mathrm{rope}}(\tokenq,\tokenk;t_q,t_k)
    =
    \mathrm{RoPE}(\tokenq,t_q)^\top\mathrm{RoPE}(\tokenk,t_k)
    =
    \tokenq^\top\R_{t_k-t_q}\tokenk.
    \label{eqn:rope_relative}
\end{equation}
\rope therefore models relations between temporal \emph{points}; \TIE extends it to temporal \emph{intervals}.

\paragraph{Core \TIE Formulation.}
To define a principled point-to-interval score, the governing requirements should be both natural and necessary. Since an event is temporally extended, a reasonable score should aggregate evidence over the full interval rather than collapse it to a single timestamp; otherwise interior evidence is discarded and temporal aliasing is introduced. At the same time, the score should remain comparable across intervals of different lengths; otherwise the accumulated similarity grows trivially with duration and systematically biases attention toward long events. These considerations naturally lead to the following two principles.

\begin{definition}[Two Principles of \TIE]\label{def:tie_principles_main}
An interval-aware score satisfies the \TIE principles if:
\smallskip
\begin{itemize}
    \item (\textit{Temporal Integrability.}) The raw interval score is defined by integrating point-wise \rope evidence over the full temporal support of the event.
Formally, for some probability kernel $\mu_I$ supported on $I$ with $\mu_I(I)>0$,
\begin{equation*}
    \bar{s}(\tokenq,\tokenk;t_q,I)
    :=
    \mathbb{E}_{\tau\sim \mu_I} [s_{\mathrm{rope}}(\tokenq,\tokenk;t_q,\tau)].
\end{equation*}
\item (\textit{Duration Invariance.}) The final interval-aware score should reflect semantic relevance rather than grow trivially with duration.
Accordingly,
\begin{equation*}
    s(\tokenq,\tokenk;t_q,I)
    :=
    \frac{1}{C(\mu_I)}\bar{s}(\tokenq,\tokenk;t_q,I),
    \qquad C(\mu_I)>0,
\end{equation*}
where $C(\mu_I)$ depends only on $\mu_I$ and is independent of $\tokenq$, $\tokenk$, and $t_q$.
\end{itemize}
\end{definition}

These principles are not merely intuitive desiderata. \emph{Temporal Integrability} is necessary because an event is a temporally extended object: collapsing $I$ to its center or boundaries discards interior evidence and introduces temporal aliasing. \emph{Duration Invariance} is equally necessary because, without normalization, the accumulated similarity grows trivially with interval length and systematically biases attention toward long events. In this sense, the principles are both reasonable and indispensable.

The two principles can characterize \textbf{TIE}. First, by \emph{Temporal Integrability},
\begin{equation}
    \bar s = \mathbb{E}_{\tau\sim \mu_I} [s_{\mathrm{rope}}(\tokenq,\tokenk;t_q,\tau)] = \mathrm{RoPE}(\tokenq,t_q)^\top\mathbb{E}_{\tau\sim \mu_I}[\mathrm{RoPE}(\tokenk;\tau)].
\end{equation}
Therefore, \TIE must be the form:
\[\mathrm{TIE}(k,I)\propto \mathbb{E}_{\tau\sim \mu_I}[\mathrm{RoPE}(\tokenk;\tau)].\]
By duration invariance, we require the renormalized coefficient should satisfy
\[1 = \mathbb{E}_{\tokenq\sim \Uniform(S^{d-1})}[\mathrm{RoPE}(\tokenq,c)^\top\mathrm{TIE}(\tokenq,I)],\]
where $c$ denotes the midpoint of interval $I$. Solving the equation above shows that the coefficient does not depend on a specific $\tokenq$:
\begin{equation}
\mathrm{TIE}(\tokenk,I) = \frac{1}{C(\mu_I)}\mathbb{E}_{\tau\sim \mu_I}[\mathrm{RoPE}(\tokenk,\tau)].
    \label{eqn:tie_single_kernel}
\end{equation}
The corresponding score will be
\begin{equation}
s(\tokenq,\tokenk,t_q,I):= \mathrm{RoPE}(\tokenq,t_q)^\top\mathrm{TIE}(\tokenk,I). 
\label{eqn:tie_score}
\end{equation}
\subsection{A Closed-Form Instantiation: Uniform Kernel Yields Sinc-Modulated RoPE}

The characterization theorem above is constructive: once the probability kernel is fixed, the corresponding \TIE instantiation is fixed as well. In particular, choosing the uniform kernel does not merely suggest a convenient variant; it yields a closed-form encoder compatible with the two principles under uniform averaging.

\begin{theorem}[Uniform-kernel \TIE yields the closed-form $\lmrope$]\label{cor:rote_uniform_main}
Given $I = [t^s,t^e]$, let $\mu_I=\Uniform([t^s,t^e])$, and define the interval center and radius by $c=(t^s+t^e)/2$ and $r=(t^e-t^s)/2$. Then the corresponding \TIE encoder has the closed form
\begin{equation}
    {\A}_{i,c,r}
    =
    \sinc(\theta_i r)
    \begin{bmatrix}
        \cos(\theta_i c) & -\sin(\theta_i c) \\
        \sin(\theta_i c) & \cos(\theta_i c)
    \end{bmatrix}.
\end{equation}
Stacking the blocks gives
    ${\R}_{c,r}
    =
    \operatorname{diag}({\A}_{1,c,r},{\A}_{2,c,r},\dots,{\A}_{{d/2},c,r})$,
and the resulting interval encoder is
    $\mathrm{RoTE}(\tokenk,c,r)
    =
    \frac{1}{ {C}_{r}}{\R}_{c,r}\tokenk$,
where $ C_{r}=\frac{2}{d}\sum_{\ell=1}^{d/2}\sinc(\theta_\ell r).$
Consequently, for a query token $\tokenq_i$ at time $m_i$ and a key token $\tokenk_j$ associated with interval $I_j=[t_j^s,t_j^e]$, the attention score becomes
\begin{equation}
    \attnscore_{i,j}
    \propto
    \mathrm{RoPE}(\tokenq_i,m_i)^\top\mathrm{RoTE}(\tokenk_j,c_j,r_j),
    \qquad
    c_j=\frac{t_j^s+t_j^e}{2},\;
    r_j=\frac{t_j^e-t_j^s}{2}.
    \label{eqn:rote_score}
\end{equation}
\end{theorem}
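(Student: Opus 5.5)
The plan is to specialize the general characterization in \eqref{eqn:tie_single_kernel} to $\mu_I=\Uniform([t^s,t^e])$. Two things must be computed: the averaged rotation $\mathbb{E}_{\tau\sim\mu_I}[\R_\tau]$, and the normalizing constant $C(\mu_I)$ that Duration Invariance fixes. Since $\R_\tau$ is block diagonal and expectation acts blockwise, it suffices to compute $\mathbb{E}_\tau[\A_{i,\tau}]$ for a single $2\times 2$ block.

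\textbf{Step 1: averaging one block.} Write $\tau=c+u$ with $u\sim\Uniform([-r,r])$. Then $\A_{i,\tau}=\A_{i,c}\A_{i,u}$, because planar rotations compose additively in angle. Hence $\mathbb{E}[\A_{i,\tau}]=\A_{i,c}\,\mathbb{E}_u[\A_{i,u}]$. The off-diagonal entries of $\mathbb{E}_u[\A_{i,u}]$ are averages of $\pm\sin(\theta_i u)$ over a symmetric interval, so they vanish. The diagonal entries equal $\frac{1}{2r}\int_{-r}^{r}\cos(\theta_i u)\,\diff u=\sin(\theta_i r)/(\theta_i r)=\sinc(\theta_i r)$, using the unnormalized convention $\sinc x=\sin x/x$. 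Thus $\mathbb{E}[\A_{i,\tau}]=\sinc(\theta_i r)\A_{i,c}={\A}_{i,c,r}$, and stacking gives $\mathbb{E}_\tau[\R_\tau]={\R}_{c,r}$. For $r=0$ we read $\sinc(0)=1$, which recovers \rope.

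\textbf{Step 2: the constant.} Substitute into the Duration Invariance normalization
\[1=\mathbb{E}_{\tokenq\sim\Uniform(S^{d-1})}\bigl[\mathrm{RoPE}(\tokenq,c)^\top \tfrac{1}{C}{\R}_{c,r}\tokenq\bigr].\]
Since $\R_c^\top{\R}_{c,r}=\operatorname{diag}(\sinc(\theta_i r) I_2)$, the integrand is $\frac1C\sum_i\sinc(\theta_i r)(q_{2i-1}^2+q_{2i}^2)$. On the sphere, $\mathbb{E}[q_m^2]=1/d$ by symmetry and $\sum_m q_m^2=1$. The expectation is therefore $\frac1C\cdot\frac{2}{d}\sum_\ell\sinc(\theta_\ell r)$, which forces $C=C_r$ as stated. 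This step also confirms that $C$ depends only on $\mu_I$, through $r$ alone.

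\textbf{Step 3: the score.} Plugging into \eqref{eqn:tie_score} gives $\attnscore_{i,j}\propto\mathrm{RoPE}(\tokenq_i,m_i)^\top\mathrm{RoTE}(\tokenk_j,c_j,r_j)$. Here the proportionality absorbs the usual $1/\sqrt d$ softmax scaling.

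\textbf{Main obstacle.} The computation itself is routine. The delicate points are conventions and degeneracies. One is the sphere-moment identity $\mathbb{E}[q_m^2]=1/d$. Another is the possibility that $C_r$ vanishes or turns negative for large $r$, since $\sinc$ changes sign. The statement implicitly assumes $C_r>0$, as required by Definition~\ref{def:tie_principles_main}. I would note this as a standing assumption, holding for example when $\theta_\ell r<\pi$ for the dominant frequencies, rather than attempt to prove it in general.
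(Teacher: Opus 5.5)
Your proposal is correct and follows essentially the same route as the paper. Both average each $2\times 2$ rotation block over $\Uniform([c-r,c+r])$ to get $\sinc(\theta_i r)\A_{i,c}$; you do this via angle addition and the odd symmetry of $\sin$, the paper via $\frac{1}{2r}\int_{c-r}^{c+r}e^{\mathrm{i}\theta\tau}\,\diff\tau=e^{\mathrm{i}\theta c}\sinc(\theta r)$. Both then fix $C_r$ by requiring unit expected self-overlap at the center: your isotropic second moment $\mathbb{E}[q_m^2]=1/d$ is just the diagonal case of the paper's identity $\mathbb{E}[\tokenq^\top M\tokenq]=\frac{1}{d}\operatorname{tr}(M)$.

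Your remark that $C_r>0$ must be taken as a standing assumption, since $\sinc$ changes sign, is a fair point that the paper's proof leaves implicit.
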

The normalization constant is chosen to preserve unit expected self-overlap at the interval center, preventing attention scores from depending trivially on interval length. The center $c$ controls the rotation phase, while the radius $r$ suppresses high-frequency bands through $\sinc(\theta r)$. Long intervals therefore act as a temporal low-pass filter, and $\lmrope$ reduces to standard \rope as $r\to0$. This gives the promised one-to-one relationship: the two principles characterize the \TIE family, and the uniform-kernel choice singles out the sinc-modulated closed-form solution.

\paragraph{Boundary-only Ablation.}
For ablations we also consider the Dirac-kernel variant \dote, which keeps only the interval boundaries:
    $\mathrm{DoTE}(\tokenk,I)
    =
    \mathbf{Concat}(\mathrm{RoPE}(\tokenk^{(s)},t^s),\mathrm{RoPE}(\tokenk^{(e)},t^e)).$
Unlike $\lmrope$, this variant ignores the interval interior. Its formal derivation is also deferred to \Cref{append:proofs_tie}.
In practice, we scale every timestamp with a scaling factor $\gamma$, that is, $c = \gamma\frac{t^s+t^e}2, r=\gamma\frac{t^e-t^s}2$, so that we can better control the decaying speed. By default we choose $\gamma=4.0$.

\subsection{\lmrope for Large-Scale Pretraining}
\label{sec:lmrope_large_scale_pretraining}

A central requirement for large-scale event-conditioned video pretraining is tolerance to imperfect temporal boundaries. In small curated benchmarks, event intervals may be manually verified with high precision. At scale, however, event-time annotations are more likely to come from VLMs, action detectors, captioning models, or other automatic perception systems. These annotators can provide useful event semantics, but their predicted start and end times are inevitably noisy. Therefore, an interval-aware temporal encoding should not depend on frame-perfect endpoints. It should be able to treat noisy intervals as coarse temporal support for event grounding. The following theorem formalizes this property for $\lmrope$. The key distinction is that $\rope$ and $\dote$ encode events through pointwise temporal phases: $\rope$ collapses an event to a single timestamp, while $\dote$ places all temporal information on the two boundaries. Consequently, perturbing the timestamp or either endpoint directly perturbs the encoded phase. In contrast, $\lmrope$ represents an event by a duration-normalized integral of $\rope$ features over the interval. Boundary perturbations therefore affect only the marginal portion of the integration domain, yielding a robustness behavior controlled by the relative boundary disturbance.
\begin{theorem}[Expected robustness of \lmrope to timestamp noise]
\label{thm:rote_noise_robustness}
Define the unscaled interval-attention score
\begin{equation}
    \attnscore_{i,j}(c,r)
    :=
    \mathrm{RoPE}(\tokenq_i,m_i)^\top \mathrm{RoTE}(\tokenk_j,c,r),
    \qquad
    \mathrm{RoTE}(\tokenk,c,r)=C_r^{-1}\mathbf{R}_{c,r}\tokenk .
\end{equation}
For event $I_j=[t_j^s,t_j^e]$, let $c_j=(t_j^s+t_j^e)/2$ and
$r_j=(t_j^e-t_j^s)/2>0$. Suppose its observed endpoints are
\begin{equation}
    \widetilde t_j^s=t_j^s+\epsilon_j^s,
    \qquad
    \widetilde t_j^e=t_j^e+\epsilon_j^e,
    \qquad
    |\epsilon_j^s|,|\epsilon_j^e|\le \delta<r_j
    \quad\text{a.s.}
\end{equation}
Write
\begin{equation}
    \Delta c_j:=\frac{\epsilon_j^s+\epsilon_j^e}{2},
    \qquad
    \Delta r_j:=\frac{\epsilon_j^e-\epsilon_j^s}{2},
    \qquad
    \widetilde c_j=c_j+\Delta c_j,
    \qquad
    \widetilde r_j=r_j+\Delta r_j .
\end{equation}
Assume the normalization does not degenerate on the perturbed radius range,
\begin{equation}
    C_{\min,j}:=
    \inf_{\rho\in[r_j-\delta,r_j+\delta]} |C_\rho|>0 .
\end{equation}
Then
\begin{equation}
\left|
\attnscore_{i,j}(\widetilde c_j,\widetilde r_j)
-
\attnscore_{i,j}(c_j,r_j)
\right|
\le
\frac{\left\lVert\tokenq_i\right\rVert\left\lVert\tokenk_j\right\rVert\delta}{r_j-\delta}
\left(
\frac{3}{C_{\min,j}}+
\frac{2}{C_{\min,j}^2}
\right).
\end{equation}
Therefore, if $\delta\le r_j/2$ and $C_{\min,j}$ is bounded away from zero, \lmrope has expected
noise sensitivity $O(\left\lVert\tokenq_i\right\rVert\left\lVert\tokenk_j\right\rVert\delta/r_j)$. By contrast,
point-wise \rope and boundary-only \dote have local worst-case timestamp sensitivity
$O(\left\lVert\tokenq_i\right\rVert\left\lVert\tokenk_j\right\rVert\theta_{\max}\delta)$, where
$\theta_{\max}:=\max_\ell\theta_\ell$, with no decay in the interval radius $r_j$.
\end{theorem}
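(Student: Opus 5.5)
The plan is to bound the score difference blockwise, using the integral form of $\lmrope$ from \Cref{cor:rote_uniform_main}, so that every estimate is expressed through the relative perturbation $\delta/r_j$ rather than through $\theta_{\max}$. Write $\bar{\mathbf R}_{c,r}:=\mathbb{E}_{\tau\sim\Uniform([c-r,c+r])}[\R_\tau]$. Then $\R_{c,r}=\bar{\mathbf R}_{c,r}$, and the score is
\[
\attnscore_{i,j}(c,r)=C_r^{-1}\,\tokenq_i^\top \R_{m_i}^\top\bar{\mathbf R}_{c,r}\tokenk_j .
\]
I would split the difference into a numerator change and a normalization change:
\[
\attnscore(\widetilde c,\widetilde r)-\attnscore(c,r)=\frac{\tokenq^\top\R_{m}^\top(\bar{\mathbf R}_{\widetilde c,\widetilde r}-\bar{\mathbf R}_{c,r})\tokenk}{C_{\widetilde r}}+\tokenq^\top\R_m^\top\bar{\mathbf R}_{c,r}\tokenk\Big(\frac1{C_{\widetilde r}}-\frac1{C_r}\Big).
\]

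\textbf{Numerator term.} The intervals $I=[t^s,t^e]$ and $\widetilde I=[\widetilde t^s,\widetilde t^e]$ have lengths $2r$ and $2\widetilde r\ge 2(r-\delta)$, and their symmetric difference has measure at most $|\epsilon^s|+|\epsilon^e|\le 2\delta$. Write the difference of averages as
\[
\frac{1}{2\widetilde r}\int_{\widetilde I}\R_\tau\,d\tau-\frac1{2r}\int_I \R_\tau\,d\tau
=\frac{1}{2\widetilde r}\Big(\int_{\widetilde I}-\int_I\Big)\R_\tau\,d\tau+\Big(\frac1{2\widetilde r}-\frac1{2r}\Big)\int_I\R_\tau\,d\tau .
\]
Each $\R_\tau$ is orthogonal, so it has operator norm $1$. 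The first piece therefore has norm at most $2\delta/(2(r-\delta))$. The second has norm at most $2r\,|\Delta r|/(2r\widetilde r)\le \delta/(r-\delta)$. Together the operator norm is at most $2\delta/(r-\delta)$, and dividing by $|C_{\widetilde r}|\ge C_{\min}$ gives a contribution of at most $2\|\tokenq\|\|\tokenk\|\delta/((r-\delta)C_{\min})$. Some slack remains against the stated constant $3/C_{\min}$.

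\textbf{Normalization term.} Since $\|\bar{\mathbf R}_{c,r}\|\le1$, the prefactor is bounded by $\|\tokenq\|\|\tokenk\|$. We also have $|1/C_{\widetilde r}-1/C_r|\le |C_{\widetilde r}-C_r|/C_{\min}^2$. The key estimate is
\[
|\sinc(\theta\widetilde r)-\sinc(\theta r)|\le \frac{2\delta}{r-\delta}\quad\text{uniformly in }\theta,
\]
which I would obtain by the same interval-average argument applied to the scalar function $\cos(\theta\tau)$, because $\sinc(\theta r)=\frac1{2r}\int_{-r}^{r}\cos(\theta\tau)\,d\tau$. Averaging over $\ell$ then gives $|C_{\widetilde r}-C_r|\le 2\delta/(r-\delta)$, hence the $2/C_{\min}^2$ term. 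Summing the two pieces yields the claimed bound; the leftover slack in the $3/C_{\min}$ constant can absorb any cruder handling, for example bounding the center shift and the radius change separately.

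\textbf{Main obstacle.} The crux is to avoid the derivative route. Differentiating in $c$ produces factors of $\theta_\ell$, which can be as large as $\theta_{\max}$. The interval-average/symmetric-difference argument is exactly what replaces $\theta_{\max}$ by $1/r$, so I would be careful to use only the orthogonality of $\R_\tau$, and never its smoothness. The nondegeneracy of $C_\rho$ along the segment is only needed at the endpoints $r$ and $\widetilde r$, and both lie in $[r-\delta,r+\delta]$.

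\textbf{Contrast claims.} For $\rope$, the score $\tokenq^\top\R_{t-m}\tokenk$ has derivative in $t$ bounded by $\theta_{\max}\|\tokenq\|\|\tokenk\|$, which gives $O(\theta_{\max}\delta)$. Choosing $\tokenq,\tokenk$ concentrated on the highest-frequency block shows this is attained locally, with no dependence on $r$. The same holds for $\dote$, applied to each boundary component. The "if $\delta\le r/2$" corollary is immediate, since then $r-\delta\ge r/2$.
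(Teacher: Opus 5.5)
Your proof is correct, and it takes a genuinely different route from the paper. The paper differentiates the closed form. It bounds $\|\partial_c\R_{c,r}\|_{\mathrm{op}}=\max_\ell\theta_\ell|\sinc(\theta_\ell r)|\le 1/r$, $\|\partial_r\R_{c,r}\|_{\mathrm{op}}\le 2/r$ (via $\theta\sinc'(\theta r)=(\cos(\theta r)-\sinc(\theta r))/r$) and $|C_r'|\le 2/r$. It turns these into bounds on $\partial_c\Phi$ and $\partial_r\Phi$ for $\Phi=C_r^{-1}\R_{c,r}$ that hold uniformly over $\rho\in[r_j-\delta,r_j+\delta]$. It then integrates along the segment from $(c_j,r_j)$ to $(\widetilde c_j,\widetilde r_j)$, bounding $|\Delta c_j|$ and $|\Delta r_j|$ by $\delta$ separately, which gives $(1+2)/C_{\min,j}+2/C_{\min,j}^2$.

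You never differentiate. You use only $\|\R_\tau\|_{\mathrm{op}}=1$, the symmetric-difference bound $|I\triangle\widetilde I|\le 2\delta$, the $1/(2\widetilde r)-1/(2r)$ correction, and the same averaging trick on $\sinc(\theta r)=\frac{1}{2r}\int_{-r}^{r}\cos(\theta\tau)\,d\tau$ for $C_r$.

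Your route buys three things:
\begin{itemize}
\item It applies verbatim to any operator-norm-bounded family $\R_\tau$, smooth or not.
\item It needs $|C_\rho|\ge C_{\min,j}$ only at the two radii $r_j$ and $\widetilde r_j$, not along a path.
\item It gives the slightly sharper constant $2/C_{\min,j}$ in place of $3/C_{\min,j}$.
\end{itemize}
The paper's route gives explicit local Lipschitz constants in $c$ and $r$ separately. These feed directly into its low-pass remark and into the parallel line-integral argument for \rope.

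Your ``main obstacle'' paragraph overstates the danger. The derivative route does not fail: the factor $\theta_\ell$ produced by $\partial_c$ multiplies $\sinc(\theta_\ell r)$, and $\theta_\ell|\sinc(\theta_\ell r)|=|\sin(\theta_\ell r)|/r\le 1/r$. That is exactly how the paper removes $\theta_{\max}$. Your averaging argument is the integrated form of the same cancellation, since the derivative of an interval average is a boundary difference divided by the length. So nothing needs to be avoided; you simply chose the integrated version.
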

\begin{remark}[Interpretation of the robustness bound]
The theorem shows that the timestamp-noise sensitivity of \lmrope is controlled
by the relative boundary error $\delta/r_j$ when the normalization is
non-degenerate. This relative scaling is the key difference from point-wise or
boundary-only encodings. Since \lmrope aggregates positional evidence over the
full temporal support of the event, perturbing the start or end time only changes
the marginal portion of the interval representation. As a result, the effect of
boundary noise is averaged over the event duration, and long intervals become
increasingly insensitive to small endpoint errors. In contrast, point-wise
\rope collapses the event to a single timestamp, while boundary-only \dote
places all temporal information on the two endpoints; in both cases, timestamp
noise directly perturbs the encoded phase and does not decay with the interval
radius.
\end{remark}

\begin{remark}[Low-pass smoothing from the closed-form interval integral]
The robustness of $\lmrope$ can also be understood from the closed-form
frequency response of the interval integral. For each RoPE frequency $\theta_\ell$,
the uniform-kernel interval encoder multiplies the corresponding rotation block
by $\mathrm{sinc}(\theta_\ell r)$. Therefore, perturbations of the interval center
are attenuated at high temporal frequencies:
\begin{equation}
    \theta_\ell \left|\mathrm{sinc}(\theta_\ell r)\right|
    =
    \left|\frac{\sin(\theta_\ell r)}{r}\right|
    \le \frac{1}{r}.
\end{equation}
Thus, up to the non-degenerate normalization factor, the local sensitivity of
$\lmrope$ to center perturbations decays with the interval radius. For small
intervals, this behavior reduces to the point-wise \rope regime; for long
intervals, the interval integral acts as a temporal low-pass filter and suppresses
high-frequency phase noise. This smoothing effect is absent in point-wise
$\rope$ and boundary-only $\dote$, whose temporal information is concentrated on
one timestamp or two endpoints and therefore does not become less sensitive as
the interval becomes longer.
\end{remark}

\begin{remark}[Why learned interval embeddings do not provide the same guarantee]
A learned interval embedding could in principle become robust to noisy boundaries
if it is trained with sufficiently diverse boundary perturbations. However, this
robustness would be distribution-dependent: it holds only for the noise patterns
seen during training and provides no structural guarantee that longer intervals
should be less sensitive to endpoint errors. In contrast, the robustness of
$\lmrope$ is architectural. The $\mathrm{sinc}$ attenuation is built into the
closed-form interval representation itself, so the smoothing effect does not rely
on learning a particular noise distribution or adding explicit boundary-noise
augmentation. This property is especially important for large-scale pretraining,
where event intervals may be generated by different VLMs or detectors whose
boundary errors vary across domains.
\end{remark}

\begin{remark}[Implication for large-scale event-conditioned pretraining]
This robustness is important for scaling interval-aware video generation beyond
small curated datasets. In real-world data collection, event intervals are often
obtained from VLMs, action detectors, captioning models, or other neural
annotators. Such pipelines can identify the correct event semantics at scale, but
their start and end timestamps are inevitably approximate. Therefore, robustness
to boundary noise is not merely a desirable property of \lmrope; it is a core
requirement for using interval-conditioned objectives in large-scale pretraining.
The $\delta/r_j$ sensitivity indicates that \lmrope can use noisy intervals as
coarse temporal support, rather than depending on frame-perfect endpoint
supervision.
\end{remark}

    \section{Experiments}
\label{sec:experiments}


We evaluate \TIE as a plug-and-play interval-aware encoding for DiT-based video generation. \omnievent~is deliberately constructed in the concurrent-event regime (68\%–99\% per-clip overlap probability), where sequential multi-event methods such as MinT~\cite{wu2025mint}, TS-Attn~\cite{zhang2026tsattntemporalwiseseparableattention}, MEVG~\cite{oh2024mevg} are structurally inapplicable—their single-active-prompt assumption disallows overlapping prompts. Our experiments therefore center on three claims: (i) \TIE preserves the visual prior of the base DiT, (ii) \TIE substantially improves interval-level controllability under concurrent prompts via human-verified and trajectory-level metrics, (iii) \TIE remains robust under noisy interval boundaries, a key requirement for large-scale real-world training where event-time annotations are typically obtained from imperfect VLM- or detector-based pipelines, and, and (iv) ablations verify that gains arise from interval integration rather than boundary-only or timestamp encoding. We retain comparisons against MinT and TS-Attn on their native sequential benchmarks (StoryEval, StoryBench) in~\Cref{sec:exp_time_precise} and~\Cref{append:moreresults}. We use \lmrope—the uniform-kernel closed-form instance of \TIE—as the default.

\subsection{Experimental Setup}

\paragraph{Backbone.} We use Wan2.2-5B-TI2V as the base video generator, following its standard DiT architecture and training recipe \cite{wan2025}. \TIE is inserted only into the text-to-video cross-attention pathway by replacing point-wise temporal \rope on event tokens with interval-aware key encodings.


\paragraph{Datasets.} 
We construct the \omnievent~ dataset, which consists of three complementary domains. For general event-conditioned video generation, we collect 250k clips from Pexels-400k \cite{pexels400k}, denoted as \textit{PexelsEvents}. For high-precision action-centric evaluation, we collect 80k clips of EldenRing gameplay videos with frame-accurate event intervals recorded from game memory, denoted as \textit{GameEvents}. For task-specific domain applications, we annotate 86k clips from the Agibot-World-Alpha \cite{contributors2024agibotworldrepo}, forming the \textit{RoboticsEvents} part. Each training sample is represented as a set of event tokens, where each event is associated with a textual description and a time interval (\Cref{fig:textual_timestamps} shows an example case). Details about the \omnievent~can be found in Appendix \ref{append:data}.

\paragraph{Baselines.} We compare against several variants: \textbf{Base}, the original Wan2.2 model without event-interval fine-tuning; \textbf{Finetuned}, a standard event-conditioned fine-tuning baseline that concatenates event descriptions and timestamps into the prompt but keeps point-wise temporal encoding; \textbf{DoTE}, a boundary-only variant that encodes only the start and end timestamps; and \textbf{RoTE}, our main method. Dense mask-aware attention is not used as a primary baseline because frame–event pairwise masks or biases break FlashAttention compatibility, whereas \TIE preserves the base DiT’s FlashAttention path by keeping the standard QK attention form. We do not introduce an additional learned time-embedding baseline because it would primarily test another parameterization of timestamp conditioning; instead, our ablation directly isolates the mechanism of interest: implicit timestamp learning and full interval integration.

\paragraph{Evaluation metrics.} 
We evaluate three aspects. First, for visual quality and semantic alignment, we report FID, FVD, and VideoScore dimensions including visual quality, temporal consistency, dynamic degree, text alignment, and factual consistency \cite{Heusel2017GANsTB,Unterthiner2018TowardsAG,he2024videoscore}. These metrics test whether \TIE preserves the base model's visual generation ability. Second, for time interval controllability, we use both human-verified event-level metrics and temporal alignment metrics. Our primary human metric is Temporal Constraint Satisfaction Rate (TCSR), which summarizes the total temporal alignment degree, together with breakdowns into Event Occurrence, Temporal Error, Order Accuracy and Overlap Accuracy. We report CLIP-Event, EMD and nDTW as trajectory-level temporal alignment proxies. CLIP-Event calculates the per-event CLIP Score for generated videos at frame-level, while EMD and nDTW compare the generated videos with reference ones in temporal alignment. These metrics complement TCSR by measuring whether the generated visual or semantic trajectory follows the target temporal structure.

\subsection{\TIE Preserves Visual Quality and DiT Compatibility}

A practical temporal-control module should improve controllability without damaging the visual prior of a pretrained DiT generator. We first evaluate whether \TIE preserves video generation quality.
\Cref{tab:expv3_video_quality} reports FID, FVD, and VideoScore results on \textit{PexelsEvents}. Compared with the Finetuned baseline, \TIE achieves comparable or better FID/FVD while maintaining similar visual quality, temporal consistency, text alignment, and factual consistency. This indicates that interval-aware event encoding does not disrupt the base model's visual generation distribution. Instead, \TIE improves temporal grounding while remaining compatible with the original DiT backbone.
These results support our first claim: \TIE is a plug-and-play interval encoding module that preserves the visual and semantic behavior of the underlying DiT generator.\vspace{-0.2cm}
\begin{table*}[htb]
\centering
\small
\setlength{\tabcolsep}{4pt}
\begin{tabular}{c|cc|ccccc|ccccc}
\hline
& \multicolumn{7}{c|}{Text-to-Video (T2V)} & \multicolumn{5}{c}{Image-to-Video (I2V)} \\
\cline{2-13}
Method 
& FID$\downarrow$ & FVD$\downarrow$ 
& VQ$\uparrow$ & TC$\uparrow$ & DD$\uparrow$ & TA$\uparrow$ & FC$\uparrow$
& VQ$\uparrow$ & TC$\uparrow$ & DD$\uparrow$ & TA$\uparrow$ & FC$\uparrow$ \\
\hline

Base 
& 59.68 & 357.51 
& 2.73 & 2.78 & 2.58 & 2.68 & 2.65
& 2.98 & 2.94 & 2.93 & 2.87 & 2.88 \\

Finetuned 
& 43.74 & 234.40 
& 3.03 & 3.01 & 2.97 & 2.86 & 2.94
& 3.26 & 3.21 & 3.28 & 3.06 & 3.12 \\

\TIE (\lmrope) 
& \textbf{42.53} & \textbf{217.29} 
& \textbf{3.10} & \textbf{3.05} & \textbf{3.06} & \textbf{2.92} & \textbf{2.99}
& \textbf{3.30} & \textbf{3.22} & 3.27 & \textbf{3.07} & \textbf{3.18} \\

\hline
\end{tabular}

\caption{
Quality on PexelsEvents (68\% per-event overlap; sequential-event methods inapplicable, see~\Cref{sec:exp_time_precise} / ~\Cref{tab:expv3_storybench}). I2V uses the first frame as visual condition. VQ/TC/DD/TA/FC = Visual/Temporal/Dynamic/Text/Factual \{Quality, Consistency, Degree, Alignment, Consistency\}.
}
\label{tab:expv3_video_quality}
\end{table*}\vspace{-0.3cm}



\subsection{\TIE Improves Time Interval Controllability}\label{sec:exp_time_precise}

We next evaluate the central claim of this work: whether \TIE improves the model's ability to generate events at specified time intervals, including sequential, overlapping, and concurrent event structures.

\paragraph{Human-Verified Temporal Constraint Satisfaction.}
Standard metrics such as FVD, CLIP, and VideoScore are insufficient for precise temporal grounding: they may judge two videos similarly even when an action starts several seconds too early or too late. We therefore introduce a structured human verification protocol that directly checks whether the generated video satisfies the temporal constraints specified in the event prompt.
For each generated video, annotators are shown the video and the structured event prompt. For each requested event, they verify whether the event occurs and, if it occurs, record the start- and end-time deviations from the requested interval $[t_s,t_e]$. Higher-level metrics such as order accuracy, overlap accuracy, and TCSR are then computed from these event-level occurrence and boundary annotations. Annotators do not see the model identity and do not provide open-ended preference judgments.
We sample 100 prompts and generate 100 videos with the finetuned baseline and 100 videos with \textbf{TIE}. For each event, 10 human annotators verify event occurrence and record the start- and end-time deviations from the requested interval. All metrics are computed from the aggregated human annotations. We report five event-level metrics: \textbf{Event Occurrence} measures event realization; \textbf{Temporal Error} measures boundary deviation with missing-event penalties; \textbf{Order Accuracy} measures before/after consistency; \textbf{Overlap Accuracy} measures concurrency preservation; and \textbf{TCSR} measures overall \textbf{T}emporal \textbf{C}onstraint \textbf{S}atisfaction \textbf{R}ates.
TCSR is computed as the fraction of satisfied event-level and relation-level constraints. An interval constraint is satisfied only if the event occurs and both start/end deviations are within a 0.25s tolerance; order and overlap constraints require the corresponding events to occur and preserve the requested temporal relation. See \Cref{sec:human_eval} for the full annotation protocol and metric definitions.
As shown in \Cref{fig:combined_tmp_metrics}(a), \TIE substantially improves all temporal constraint metrics. \Cref{fig:t2v_demo} provides a representative qualitative example: \TIE aligns event onset and multi-event responses more faithfully than prompt concatenation alone. This confirms that \TIE does not merely improve event recognition, but grounds events to the correct temporal support.

\begin{figure*}[htb]
    \centering
    \includegraphics[width=1.0\linewidth]{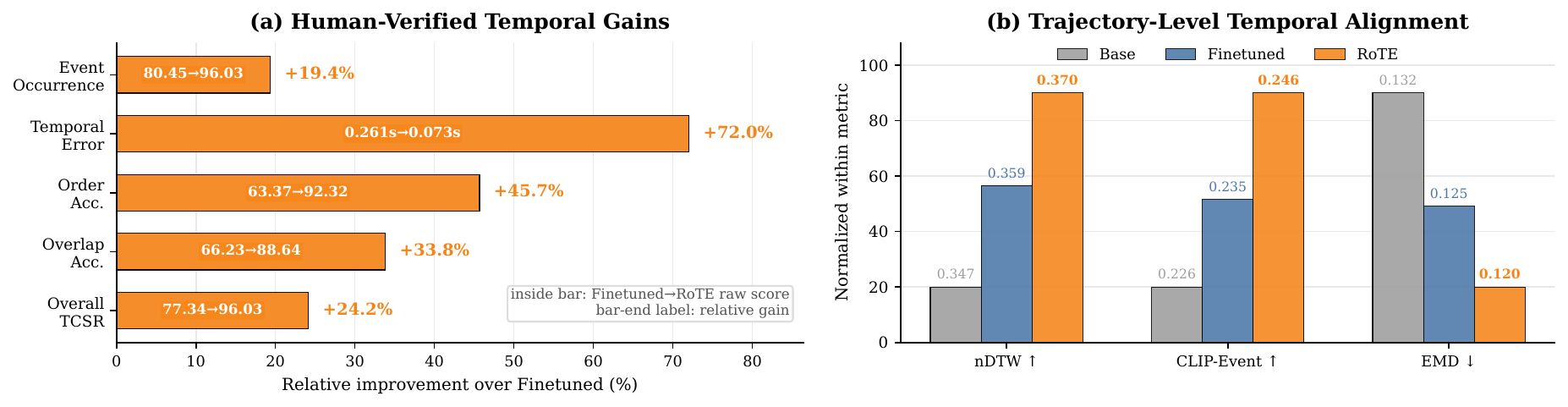}\vspace{-0.1cm}
    \captionsetup{justification=justified, singlelinecheck=true}
    \caption{
\textbf{Temporal alignment improvements of \TIE.}
(a) Human-verified metrics show that RoTE consistently improves over finetuning across event occurrence, temporal error, ordering accuracy, overlap accuracy, and overall TCSR. Each bar shows the relative improvement over the finetuned baseline, while the in-bar label reports the raw score change from finetuning to \TIE.
(b) Trajectory-level metrics further confirm the improvement: \TIE achieves higher nDTW and CLIP-Event scores and lower EMD, indicating better temporal alignment at the trajectory level.
}
\label{fig:combined_tmp_metrics}
\end{figure*}

\begin{figure*}[htb]
    \centering
    \hspace{30pt}
    \includegraphics[width=1.0\linewidth]{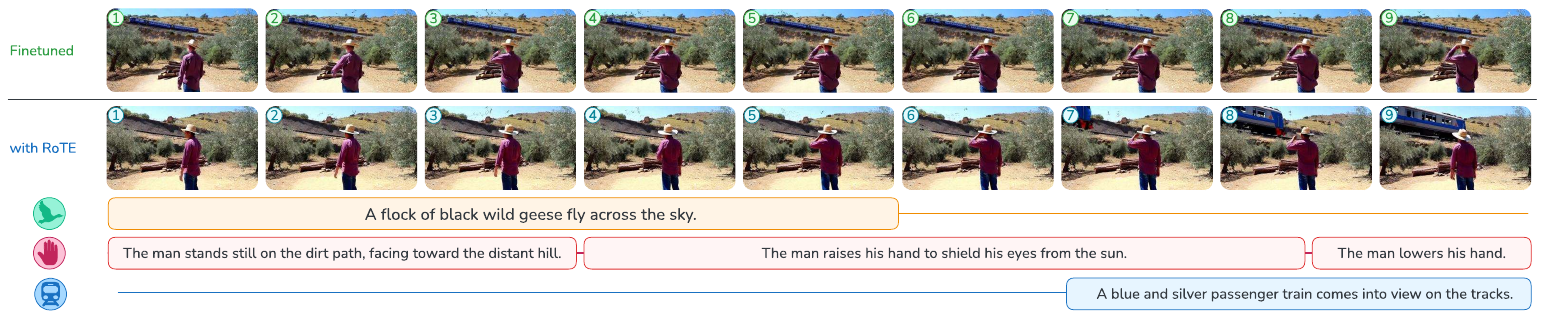}\vspace{-0.7em}
    \captionsetup{justification=justified, singlelinecheck=true}
    \caption{Text-to-video generation example. The \textbf{Finetuned} baseline misses some requested events (e.g., ``the man puts his hand off'') and misaligns event onset (e.g., the train remains static in the frame instead of entering at 7.0s), whereas the \TIE-enhanced variant responds at the correct time.}
    \label{fig:t2v_demo}
    \vspace{-0.2cm}
\end{figure*}
\begin{table}[htb]
\centering
\small
\setlength{\tabcolsep}{6pt}
\begin{tabular}{c|ccccc|cc|c}
\hline
Method & Human &	Animal 	&Object 	& Retrieval & 	Creative 	& Easy & 	Hard &	Average\\ \hline
Base (10s) & 27.6\% &	27.2\% 	&20.3\% 	& 43.1\% &	13.9\% &	34.5\% &	5.9\% &	23.9\%\\
Finetuned (10s)  &	39.0\% &	41.0\% &	35.7\% 	&53.1\%& 	25.3\% &	55.5\% &	21.5\% &	38.8\%\\
{\lmrope} (10s) & 58.5\% &	52.3\% &	48.4\% &	53.8\% &	42.7\% &	64.4\% &	38.8\% &	52.3\%\\ 
Base (5s) & 41.1\% &	41.7\% &	35.9\% &	51.2\% &	28.0\% &	58.6\% &	20.2\% &	39.7\%\\
TS-Attn\cite{zhang2026tsattntemporalwiseseparableattention} (5s) & 42.2\% & 45.4\% & 36.6\% & 52.8\% & 33.0\% & 56.9\% &25.5\% & 41.8\%\\ 
Finetuned (5s)& 41.9\% &	39.8\% &	37.1\%& 	50.8\% 	&26.7\% &	55.8\% &	23.0\% &	39.3\%\\
{\lmrope} (5s) & 54.7\% &	52.8\% &	42.4\% 	& 53.1\% &	36.3\% &	61.2\% &	36.9\% &	49.4\%\\ \hline
\end{tabular}
\caption{StoryEval evaluation results under 10s and 5s generation settings. 
We include both settings because our main setup generates 10s videos (161 frames, 16 fps), whereas Wan2.2-5B primarily supports 5s generation (121 frames, 24 fps). 
All methods are evaluated at 720P. 
TS-Attn is reproduced at 720P instead of its original 480P setting because Wan2.2-5B does not support 480P synthesis.}\vspace{-0.3cm}
\label{tab:expv3_storyeval}
\end{table}\vspace{-0.2cm}


\paragraph{Trajectory-Level Temporal Alignment.}

Human-verified TCSR directly evaluates event-level constraint satisfaction. We complement it with trajectory-level temporal alignment metrics with nDTW \cite{sakoe1978dynamic}, EMD and CLIP-Event.
Given a generated video and a reference event timeline, nDTW computes the distance of the optimal temporal warping path between the generated sequence and the target sequence, EMD calculates the Wasserstein distance between generated videos and ground truths on the temporal axis, while CLIP-Event calculates the per-event frame-level video CLIP Score. Higher nDTW and CLIP-Event indicate higher temporal alignment. Lower EMD represents closer results in temporal event distribution.
As shown in Figure \ref{fig:combined_tmp_metrics} (b), \TIE consistently improves nDTW, EMD and CLIP-Event. Combined with the TCSR results above, this shows that \TIE improves temporal controllability from both event-level and trajectory-level perspectives.

\paragraph{Comparison on Baselines' Native Regime.}
\omnievent~experiments above target the concurrent regime where MinT and TS-Attn are structurally inapplicable. We therefore evaluate \TIE on StoryEval~\cite{wang2024storyeval} and StoryBench~\cite{bugliarello-etal-2023-storybench}—sequential-event benchmarks built for prior work, where baselines operate in their designed regime.
As shown in Table \ref{tab:expv3_storyeval}, we can see that, under \text{OOD} prompt (the prompt of StoryEval is out-of-distribution from our training data) and \text{OOD} length (we mainly train the model at 10s video clips) condition, Finetuned fails to improve over baseline, while \TIE successfully models the temporal organization.
Comparison results on StoryBench \cite{bugliarello-etal-2023-storybench} can be found in \Cref{append:moreresults}.
These results further show that \textbf{TIE}'s temporal-modeling advantage holds under broader evaluation protocols. Additional evaluations on StoryBench and prompt-extension settings in \Cref{append:moreresults} show consistent gains under broader story-generation and prompt-conditioning protocols.

\paragraph{Faster Convergence of Temporal Grounding.}
Beyond final accuracy, we evaluate whether \TIE accelerates the learning of temporal grounding. We train Finetuned and \TIE variants under identical data (\textit{RoboticsEvents}), batch size, learning rate, and training schedule, and measure temporal metrics at matched training steps.
As shown in \Cref{fig:convergence}, TIE reaches the same temporal-control level (nDTW, EMD) and visual quality (FVD, CLIP-Video) (it is noted that CLIP-Video is different from CLIP-Event) in substantially fewer steps. This suggests that interval-aware encoding improves the efficiency of learning temporal grounding, rather than merely adding capacity or overfitting to the final evaluation metrics.

\begin{figure*}[htb]
    \centering
    \includegraphics[width=1.0\linewidth]{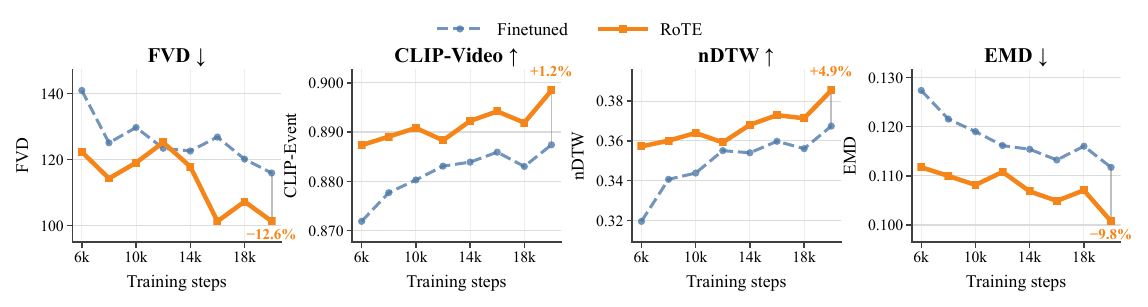}\vspace{-0.4cm}
    \captionsetup{justification=justified, singlelinecheck=true}
    \caption{Performance trends during training with and without \TIE. \TIE leads to faster improvement across both visual and temporal metrics.}
    \label{fig:convergence}
    \vspace{-0.4cm}
\end{figure*}

\begin{table*}[htb]
\centering
\small
\setlength{\tabcolsep}{4pt}
\begin{tabular}{c|cc|c|ccccc}
\hline
Method & FID$\downarrow$ & FVD$\downarrow$ & CLIP-Event $\uparrow$ & VQ$\uparrow$ & TC$\uparrow$ & DD$\uparrow$ & TA$\uparrow$ & FC$\uparrow$\\ \hline
Base & 59.68 & 357.51 & 0.226 & 2.73 & 2.78 & 2.58 & 2.68 & 2.65\\
NoRoPE & 43.74 & 234.40 & 0.235 & 3.03 & 3.01 & 2.97 & 2.86 & 2.94\\
{\dote} & 43.84 & 234.79 & 0.241 & 3.05 & 3.01 & 3.01 & 2.88 & 2.94\\
{\lmrope} & \textbf{42.53} & \textbf{217.29} & \textbf{0.246} & \textbf{3.10} & \textbf{3.05} & \textbf{3.06} & \textbf{2.92} & \textbf{2.99}\\ \hline
$\gamma = 2.0$ & 43.00 & 231.20 & 0.246 & 3.08 & 3.04 & 3.04 & 2.90 & 2.98\\
$\gamma = 8.0$ & 43.15 & 249.24 & 0.245 & 3.09 & 3.05 & 3.06 & 2.92 & 2.99\\ \hline
\end{tabular}
\caption{Ablation study on \textit{PexelsEvents}. The comparison between \textbf{NoRoPE}, \dote, and \TIE isolates the effect of interval-aware integration over prompt concatenation and boundary-only encoding.}
\label{tab:expv3_ablation}
\vspace{-0.5em}
\end{table*}



\paragraph{Case Study: Complex Multi-Subject Interaction.}

We evaluate \TIE on gameplay data from \textit{GameEvents} to test fine-grained action-centric event grounding in a high-dynamic multi-subject domain. Each sample contains structured event intervals for the player and boss. Unlike general web videos, this domain provides highly accurate event boundaries from game instrumentation, allowing us to test whether the model respects precise action timing.
\Cref{fig:eldenring_controlling} shows controlled prompt modifications applied to later event intervals while keeping earlier intervals fixed. The generated videos share the same early trajectory, then diverge according to the modified future event descriptions. In one sequence, the Tarnished rushes forward and attempts an attack but is interrupted by the boss. In another, the boss changes attack type, producing a different visual effect. In a third, the Tarnished rolls backward and avoids the hit. This demonstrates that \TIE supports localized temporal editing: changing a future interval affects the intended future behavior without corrupting earlier events.

\begin{figure*}[htb]
    \centering
    \hspace{30pt}
    \includegraphics[width=1.0\linewidth]{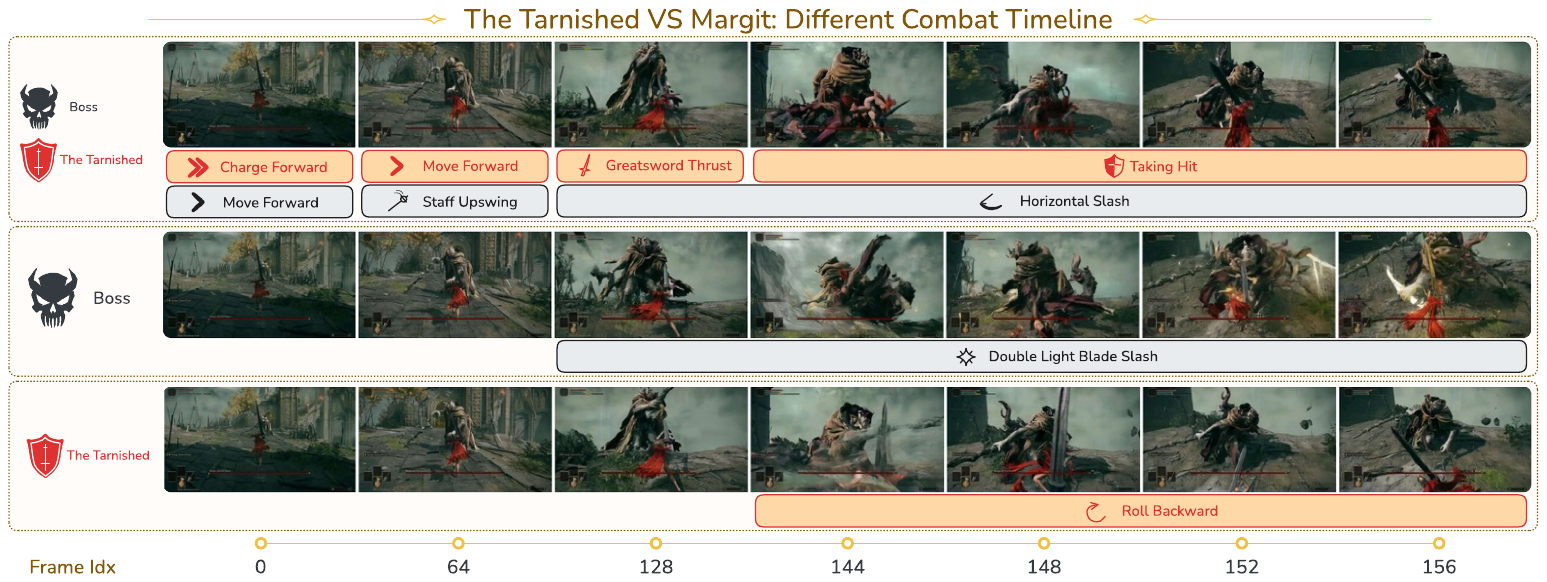}\vspace{-0.1cm}
    \captionsetup{justification=justified, singlelinecheck=true}
    \caption{Generation Results on Game Data: In each row we show the frames and their actions. To illustrate the controllability enabled by \textbf{TIE}, we modify the target events and obtain different endings.}
    \label{fig:eldenring_controlling}
    \vspace{-1.2em}
\end{figure*}




We also performed experiments on \textit{RoboticsEvents}. Visual results can be found in \Cref{sec:robo_case}.
\subsection{Ablation Study}

Finally, we conduct minimal ablations to verify the components implied by the \TIE formulation. Since \TIE is a simple interval-encoding module rather than a multi-component system, we focus on the variants that directly correspond to the theory. \textbf{NoRoPE} removes temporal encoding and relies on prompt concatenation. \textbf{DoTE} encodes only interval boundaries using a Dirac-kernel variant, preserving start/end timestamps while discarding interior evidence. \lmrope uses uniform-kernel integration over the full interval with normalization. We also test sensitivity to the scaling factor $\gamma$.
The comparison between \textbf{NoRoPE} and \dote shows that explicitly encoding temporal boundaries already improves temporal alignment. However, \dote remains weaker than \TIE because it ignores the interval interior. The full \lmrope version achieves the best FVD and CLIP-Event, supporting the theoretical claim that interval integration is more appropriate than boundary-only encoding. The $\gamma$ variants remain close to the full model, suggesting that the method is mostly hyperparameter-agnostic.

\subsection{Robustness to noisy temporal annotations.}
\begin{figure*}[htb]
    \centering
    \includegraphics[width=1.0\linewidth]{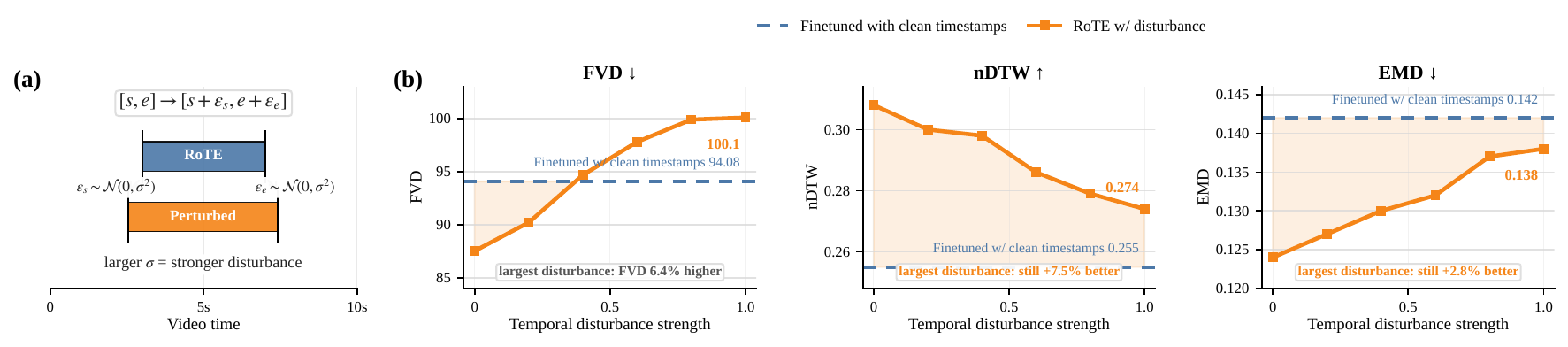}\vspace{-0.15cm}
    \captionsetup{justification=justified, singlelinecheck=true}
    \caption{\textbf{Robustness to temporal annotation disturbance.}
(a) We perturb the start and end timestamps of each \lmrope interval with Gaussian noise:
$[s,e]\rightarrow[s+\epsilon_s,e+\epsilon_e]$, where
$\epsilon_s,\epsilon_e\sim\mathcal{N}(0,\sigma^2)$.
(b) \lmrope remains robust as the disturbance strength increases: even under the largest disturbance, it still achieves better temporal alignment than finetuning with clean timestamps, while only moderately increasing FVD.}
    \label{fig:temp_dist}
\end{figure*}
In real-world data collection, perfectly precise temporal boundaries are rarely available at scale. 
Unlike controlled game traces or carefully curated robotic demonstrations, large-scale event annotations for open-domain videos will often need to be obtained from VLMs, action detectors, captioning models, or other automatic perception systems. 
Such annotations inevitably contain boundary noise: the event may be correctly identified, but its start and end timestamps may be shifted by several frames or even longer. 
Therefore, robustness to imperfect interval endpoints is not merely a desirable property, but a prerequisite for using interval-aware temporal encoding in large-scale video pretraining.

To test this property, we perturb the start and end timestamps of each RoTE interval with Gaussian noise and evaluate how temporal alignment changes as the noise strength increases. 
The results are shown in Figure~\ref{fig:temp_dist}. 
\lmrope remains stable under increasingly noisy boundaries: even at the largest perturbation level, it still outperforms the finetuning baseline trained with clean timestamps on nDTW and EMD. 
This indicates that the benefit of \lmrope does not rely on exact boundary supervision. 
Instead, \lmrope uses the interval as a coarse temporal support for event grounding, and its duration-normalized interval aggregation provides a robust inductive bias even when the annotated endpoints are imprecise.

This robustness is important for scalability. 
A temporal representation that requires frame-perfect event boundaries would be difficult to deploy beyond small curated datasets, because large-scale real-world event annotations are necessarily approximate. 
By remaining effective under noisy start and end times, \lmrope makes interval-conditioned training compatible with realistic annotation pipelines, where event intervals may come from automatic VLM-based labeling or other neural detectors rather than manual frame-level supervision.



\subsection{Summary of Experimental Findings}
The experiments provide consistent evidence for the effectiveness of \TIE.
First, \TIE is compatible with strong pretrained DiT models: it preserves visual quality and semantic alignment while improving temporal grounding. 
Second, \TIE improves precise interval controllability, with gains verified by human metrics, trajectory-level metrics, and out-of-distribution story-generation benchmarks.
Third, \TIE remains robust when dealing with noisy timestamp, indicating robustness to imperfect interval boundaries.
Fourth, qualitative results on game and robotics scenarios show that \TIE supports multi-subject interactions and future-event editing. 
Finally, ablations confirm that the gains come from interval-aware integration and duration-normalized encoding, rather than generic timestamp conditioning or boundary-only heuristics.
Together, these results show that \TIE is a principled, practical, and robust interval-aware temporal encoding method for event-centric video generation. 
    \section{Conclusion}
This paper proposes Time Interval Encoding (\TIE), an interval-aware formulation for the concurrent-event regime that single-active-prompt methods cannot represent. By modeling events as time intervals rather than as the point-wise, disjoint slots assumed by prior multi-event generators, \TIE enables precise control over event duration, ordering, and simultaneous overlap.
To demonstrate its practical effectiveness, we evaluate \TIE within a DiT-based video generation setting.
Experiments on both general-domain and domain-specific video data show that \TIE improves temporal alignment and the modeling of highly dynamic motions, particularly in complex multi-event and multi-subject scenarios.
Overall, \TIE provides a principled temporal foundation for controllable video generation and suggests a promising direction for long-horizon generation and interactive world simulation.
We believe \TIE may serve as a useful building block for future generative models that require structured reasoning over time.

\section*{Acknowledgements}

We thank Pexels for providing a large collection of publicly accessible videos that supports the construction of PexelsEvents. We also acknowledge \textit{Elden Ring} and the AgiBot dataset, which provide valuable sources for studying high-dynamic gameplay interactions and robotics event structures. We thank the Cheat Engine community and tooling ecosystem for enabling game-state inspection during data collection. We are grateful to the external annotation team for their careful human verification efforts, and to Xiangrui Ke for helpful suggestions and discussions on data annotation.
\section*{Impact Statement}

This paper presents work whose goal is to advance the field of machine learning, specifically focusing on \textbf{spatiotemporal modeling in video generation}. 
The proposed method contributes to the broader scientific community by \textbf{improving multi-subject control and temporal consistency in multi-event video generation}, with potential applications in \textbf{controllable video creation and simulation}. 




    \bibliographystyle{plain}
    \bibliography{defines/tie}
}{}
\iftoggle{isAppendix}{
\appendix
\renewcommand{\theequation}{A\arabic{equation}}
\renewcommand{\thefigure}{A\arabic{figure}}
\renewcommand{\thetable}{A\arabic{table}}
\renewcommand{\theHequation}{A\arabic{equation}}
\renewcommand{\theHfigure}{A\arabic{figure}}
\renewcommand{\theHtable}{A\arabic{table}}
\setcounter{equation}{0}
\setcounter{figure}{0}
\setcounter{table}{0}
\section{Details and Proofs for \TIE}
\label{append:tie}
\subsection{Proofs for Section~3}
\label{append:proofs_tie}
\label{append:derivation}
This subsection provides the full formalization and proofs for the principle-based characterization stated in \cref{sec: RoTE}, and proves \cref{cor:rote_uniform,cor:dote_dirac}.

\begin{definition}[Formal principles for interval-aware scores]\label{def:tie_principles}
Let $s_{\mathrm{rope}}(\tokenq,\tokenk;t_q,\tau)$ denote the standard point-to-point \rope score in \cref{eqn:rope_relative}. We say that an interval-aware score $s(\tokenq,\tokenk;t_q,I)$ over an interval $I=[t^s,t^e]$ satisfies the following two principles:
\begin{description}
    \item[Temporal Integrability, i.e.,] There exists a probability $\mu_I$ supported on $I$ with $\mu_I(I)>0$ such that the \emph{raw} interval score is obtained by integrating point-wise \rope scores over the temporal support of the interval,
    \begin{equation}
        \bar{s}(\tokenq,\tokenk;t_q,I)
        :=
        \mathbb E_{\tau\sim\mu_I} [s_{\mathrm{rope}}(\tokenq,\tokenk;t_q,\tau)].
        \label{eqn:tie_temporal_integration}
    \end{equation}
    \item[Duration Invariance, i.e.,] The final interval-aware score is obtained from the raw integral by a positive scalar normalization,
    \begin{equation}
        s(\tokenq,\tokenk;t_q,I)
        :=
        \frac{1}{C(\mu_I)}\bar{s}(\tokenq,\tokenk;t_q,I),
        \qquad C(\mu_I)>0,
        \label{eqn:tie_duration_normalization}
    \end{equation}
    where $C(\mu_I)$ depends only on the interval measure $\mu_I$ and is independent of $\tokenq$, $\tokenk$, and $t_q$.
\end{description}
When both conditions hold, we call $s$ a \emph{principle-consistent} point-to-interval score.
\end{definition}

\begin{theorem}[Uniform-kernel \TIE yields $\lmrope$]\label{cor:rote_uniform}
For the uniform kernel $\mu_I=\Uniform([t^s,t^e])$ with interval center $c=(t^s+t^e)/2$ and radius $r=(t^e-t^s)/2$, the \TIE encoder takes the closed form
\begin{equation}
    {\mathbf{R}}_{c,r} = \operatorname{diag}({\mathbf{A}}_{1,c,r}, {\mathbf{A}}_{2,c,r}, \dots, {\mathbf{A}}_{{d/2},c,r}),
\end{equation}
and therefore reduces to the normalized interval encoder
\begin{equation}
    \mathrm{RoTE}(\tokenk, c, r) = \frac{1}{{C}_{r}} {\mathbf{R}}_{c,r} \tokenk.
\end{equation}
Consequently, for a query token $\tokenq_i$ at time $m_i$ and a key token $\tokenk_j$ associated with interval $I_j=[t_j^s,t_j^e]$, the \TIE attention score becomes
\begin{equation}
    \attnscore_{i,j}
    \propto
    \mathrm{RoPE}(\tokenq_i,m_i)^\top\mathrm{RoTE}(\tokenk_j,c_j,r_j),
    \qquad
    c_j=\frac{t_j^s+t_j^e}{2},\;
    r_j=\frac{t_j^e-t_j^s}{2}.
\end{equation}
\end{theorem}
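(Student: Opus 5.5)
The plan is to start from the characterization in \cref{eqn:tie_single_kernel}, which says that under the two principles the encoder must be $\mathrm{TIE}(\tokenk,I)=C(\mu_I)^{-1}\mathbb{E}_{\tau\sim\mu_I}[\mathrm{RoPE}(\tokenk,\tau)]$. The proof then splits into two steps: compute the expectation in closed form for $\mu_I=\Uniform([t^s,t^e])$, and identify the constant $C(\mu_I)$ from the duration-invariance normalization. Since $\mathrm{RoPE}(\tokenk,\tau)=\R_\tau\tokenk$ and expectation is linear, $\mathbb{E}_\tau[\R_\tau\tokenk]=(\mathbb{E}_\tau\R_\tau)\tokenk$. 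Because $\R_\tau$ is block diagonal, it suffices to compute $\mathbb{E}_\tau[\A_{i,\tau}]$ block by block.

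For a single block, substitute $\tau=c+u$ with $u\sim\Uniform([-r,r])$ and use the group property $\A_{i,c+u}=\A_{i,c}\A_{i,u}$. Then $\mathbb{E}[\A_{i,\tau}]=\A_{i,c}\,\mathbb{E}_u[\A_{i,u}]$. By symmetry of $u$, the odd entries $\pm\mathbb{E}[\sin(\theta_i u)]$ vanish. The even entries are $\frac{1}{2r}\int_{-r}^{r}\cos(\theta_i u)\,\diff u=\frac{\sin(\theta_i r)}{\theta_i r}=\sinc(\theta_i r)$, using the unnormalized sinc convention, which matches the paper's remark $\theta\,\sinc(\theta r)=\sin(\theta r)/r$. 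So $\mathbb{E}_u[\A_{i,u}]=\sinc(\theta_i r)\,\mathbf{I}_2$, and stacking the blocks gives $\mathbb{E}_\tau[\R_\tau]=\R_{c,r}$ exactly. The degenerate case $r=0$, where $\sinc(0)=1$, recovers \rope and is consistent with the Dirac limit.

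Next determine $C(\mu_I)$ from the normalization $1=\mathbb{E}_{\tokenq\sim\Uniform(S^{d-1})}[\mathrm{RoPE}(\tokenq,c)^\top\mathrm{TIE}(\tokenq,I)]$. Substituting the formula above, this expectation equals $C^{-1}\,\mathbb{E}_\tokenq[\tokenq^\top\R_c^\top\R_{c,r}\tokenq]$. Since $\R_c^\top\R_{c,r}=\operatorname{diag}(\sinc(\theta_\ell r)\mathbf{I}_2)$ and $\mathbb{E}[\tokenq\tokenq^\top]=\frac1d\mathbf{I}_d$ for $\tokenq$ uniform on the sphere, the expectation is $\frac1d\operatorname{tr}(\cdot)=\frac{2}{d}\sum_\ell\sinc(\theta_\ell r)$. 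Hence $C(\mu_I)=C_r$, which depends only on $\mu_I$ through $r$, as Duration Invariance requires. This step carries an implicit non-degeneracy requirement: we need $C_r>0$, or at least $C_r\neq 0$, and I would state that assumption explicitly. This is the only delicate point, since the sinc terms can be negative for large $r$.

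Finally, plug this into \cref{eqn:tie_score} to get $s=\mathrm{RoPE}(\tokenq_i,m_i)^\top\mathrm{RoTE}(\tokenk_j,c_j,r_j)$, which gives \cref{eqn:rote_score}. The proportionality accounts for the usual softmax scaling. The main obstacle is not the computation itself but justifying that the normalization constant is well defined, i.e.\ handling possible sign changes of $C_r$.
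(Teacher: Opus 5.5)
Your proposal is correct and follows the paper's proof essentially step for step. The paper computes the blockwise average via $\frac{1}{2r}\int_{c-r}^{c+r}e^{\mathrm{i}\theta\tau}\,\diff\tau=e^{\mathrm{i}\theta c}\sinc(\theta r)$, which is your real-matrix argument (group property plus the symmetry of $u$) in complex form, and it fixes $C_r=\frac{2}{d}\sum_{\ell}\sinc(\theta_\ell r)$ by the same isotropy/trace computation. Your remark that $C_r>0$ has to be assumed is a fair addition: the paper's proof leaves it implicit and only imposes the non-degeneracy condition $C_{\min,j}>0$ later, in the robustness theorem.
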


\begin{proof}[Proof of \cref{cor:rote_uniform}]
Set $\mu_I=\Uniform([t^s,t^e])$ and write the interval as $I=[c-r,c+r]$. By \cref{eqn:tie_single_kernel},
\begin{equation}
    \mathrm{TIE}(\tokenk,I)
    =
    \frac{1}{ C_{r}}
    \frac{1}{2r}
    \int_{c-r}^{c+r}\rope(\tokenk,\tau)\diff\tau.
    \label{eqn:appendix_uniform_tie}
\end{equation}
For each frequency $\theta$,
\begin{equation}
    \frac{1}{2r}\int_{c-r}^{c+r}e^{\mathrm{i}\theta\tau}\diff\tau
    =
    e^{\mathrm{i}\theta c}\sinc(\theta r),
\end{equation}
so every $2\times 2$ \rope block averages to ${\mathbf{A}}_{i,c,r}=\sinc(\theta_i r)\mathbf{A}_{i,c}$. Stacking the blocks yields the final results.

To enforce Duration Invariance, we normalize by requiring unit expected self-overlap at the interval center:
\begin{equation}
    1
    =
    \E_{\tokenq\sim\Uniform(S^{d-1})}
    \left[
        \mathrm{RoPE}(\tokenq,c)^\top\mathrm{RoTE}(\tokenq,c,r)
    \right].
\end{equation}
Using the isotropy identity $\E_{\tokenq\sim\Uniform(S^{d-1})}[\tokenq^\top M\tokenq]=\frac{1}{d}\operatorname{tr}(M)$, we obtain
\begin{equation}
    1
    =
    \frac{1}{C_{r}}
    \frac{2}{d}
    \sum_{\ell=1}^{d/2}\sinc(\theta_\ell r),
\end{equation}
Substituting the resulting encoder into the $\TIE$ attention score yields the attention score \cref{eqn:tie_score} in \cref{eqn:rote_score}
\end{proof}

\begin{theorem}[Dirac-kernel \TIE yields \dote]\label{cor:dote_dirac}
If \TIE is instantiated with Dirac kernels concentrated at the interval boundaries, i.e., $\mu_{I,s}=\delta_{t^s}$ and $\mu_{I,e}=\delta_{t^e}$ on the two channel groups of $\tokenk=\tokenk^{(s)}\oplus\tokenk^{(e)}$, then the resulting interval encoder is exactly the boundary-only form \dote:
\begin{equation}
    \mathrm{DoTE}(\tokenk, I) = \mathbf{Concat}(\mathrm{RoPE}(\tokenk^{(s)},t^s),\mathrm{RoPE}(\tokenk^{(e)},t^e)).
\end{equation}
\end{theorem}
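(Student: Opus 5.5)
The plan is to apply the single-kernel characterization \cref{eqn:tie_single_kernel} separately on each channel group. The statement splits the key as $\tokenk=\tokenk^{(s)}\oplus\tokenk^{(e)}$, with kernel $\delta_{t^s}$ on the first group and $\delta_{t^e}$ on the second. Because the \rope matrix $\R_\tau$ is block-diagonal over $2\times2$ frequency blocks, \rope acts independently on any partition of channels into whole blocks. So I will assume each group is a union of complete rotation blocks, say the first $d/4$ blocks and the last $d/4$ blocks; otherwise the concatenation in \dote is not well defined. Under this assumption,
\[
\mathrm{RoPE}(\tokenk,\tau)=\mathrm{RoPE}(\tokenk^{(s)},\tau)\oplus\mathrm{RoPE}(\tokenk^{(e)},\tau).
\]

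\textbf{Temporal Integrability.} I apply the expectation in \cref{eqn:tie_temporal_integration} groupwise. For the first group, $\mathbb{E}_{\tau\sim\delta_{t^s}}[\mathrm{RoPE}(\tokenk^{(s)},\tau)]=\mathrm{RoPE}(\tokenk^{(s)},t^s)$ by the sifting property of the Dirac measure. Likewise, the second group gives $\mathrm{RoPE}(\tokenk^{(e)},t^e)$. Both are probability measures supported on $I$ with $\mu(I)=1>0$, so the definition applies. By linearity of the bilinear score, the raw score is then
\[
\bar s=\mathrm{RoPE}(\tokenq^{(s)},t_q)^\top\mathrm{RoPE}(\tokenk^{(s)},t^s)+\mathrm{RoPE}(\tokenq^{(e)},t_q)^\top\mathrm{RoPE}(\tokenk^{(e)},t^e),
\]
which equals $\mathrm{RoPE}(\tokenq,t_q)^\top\mathbf{Concat}(\cdots)$.

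\textbf{Duration Invariance.} Next I compute the normalizer exactly as in the proof of \cref{cor:rote_uniform}. A Dirac average applies no attenuation, so each block $M$ in $\mathrm{RoPE}(\cdot,c)^\top\mathrm{TIE}(\cdot,I)$ is a pure rotation $\R_{t^s-c}$ or $\R_{t^e-c}$. Using the isotropy identity,
\[
C=\tfrac1d\operatorname{tr}(M)=\tfrac{2}{d}\sum_\ell\cos(\theta_\ell r)
\]
over the appropriate blocks, since $t^s-c=-r$ and $t^e-c=r$. This is a constant depending only on $\mu_I$, so it can be absorbed into the proportionality of the attention score. Alternatively, one can evaluate the self-overlap at the boundary points themselves rather than at $c$, which gives $C=1$ and makes the identity exact with no constant.

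\textbf{Main obstacle.} The computation itself is immediate; the real issue is the normalization convention. $\tfrac{2}{d}\sum_\ell\cos(\theta_\ell r)$ is not identically $1$ and could be nonpositive, which would violate $C>0$. I would handle this by stating the result up to the positive normalizer, matching the ``$\propto$'' convention used in \cref{cor:rote_uniform}. To avoid any sign issue, I would instead normalize at the boundary anchors, which gives $C=1$ and recovers \dote exactly.
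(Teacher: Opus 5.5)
Your proposal is correct and follows the paper's argument: apply the Dirac sifting property separately on $\tokenk^{(s)}$ and $\tokenk^{(e)}$, then concatenate with normalization $C=1$. The only difference is how you justify $C=1$: the paper asserts it because each Dirac kernel has unit mass (a reason that would equally apply to the uniform kernel, where $C_r\neq 1$), whereas you show that the center-anchored self-overlap rule used for \lmrope would give $\frac{2}{d}\sum_\ell\cos(\theta_\ell r)$, which can be nonpositive, and you obtain $C=1$ by anchoring the self-overlap at the boundary atoms, which is the more careful route.
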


\begin{proof}[Proof of \cref{cor:dote_dirac}]
Let $\tokenk=\tokenk^{(s)}\oplus\tokenk^{(e)}$ and instantiate \TIE on the two channel groups with the Dirac kernels $\mu_{I,s}=\delta_{t^s}$ and $\mu_{I,e}=\delta_{t^e}$. Because each Dirac kernel has unit mass, the corresponding normalization is $C(\mu_{I,s})=C(\mu_{I,e})=1$. Applying \TIE to the two subspaces gives
\begin{equation}
    \mathrm{TIE}_{s}(\tokenk^{(s)},I)=\mathrm{RoPE}(\tokenk^{(s)},t^s),
    \qquad
    \mathrm{TIE}_{e}(\tokenk^{(e)},I)=\mathrm{RoPE}(\tokenk^{(e)},t^e).
\end{equation}
Concatenating the two channel groups yields
\begin{equation}
    \mathrm{TIE}(\tokenk,I)
    =
    \mathbf{Concat}(\mathrm{RoPE}(\tokenk^{(s)},t^s),\mathrm{RoPE}(\tokenk^{(e)},t^e))
    =
    \mathrm{DoTE}(\tokenk,I),
\end{equation}
which is exactly the $\mathrm{DoTE}$. Since the construction keeps only two boundary atoms, it is a boundary-only specialization of \TIE and does not average over the interval interior.
\end{proof}

\begin{theorem}[Expected robustness of \lmrope to timestamp noise]
Define the unscaled interval-attention score
\begin{equation}
    \attnscore_{i,j}(c,r)
    :=
    \mathrm{RoPE}(\tokenq_i,m_i)^\top \mathrm{RoTE}(\tokenk_j,c,r),
    \qquad
    \mathrm{RoTE}(\tokenk,c,r)=C_r^{-1}\mathbf{R}_{c,r}\tokenk .
\end{equation}
For event $I_j=[t_j^s,t_j^e]$, let $c_j=(t_j^s+t_j^e)/2$ and
$r_j=(t_j^e-t_j^s)/2>0$. Suppose its observed endpoints are
\begin{equation}
    \widetilde t_j^s=t_j^s+\epsilon_j^s,
    \qquad
    \widetilde t_j^e=t_j^e+\epsilon_j^e,
    \qquad
    |\epsilon_j^s|,|\epsilon_j^e|\le \delta<r_j
    \quad\text{a.s.}
\end{equation}
Write
\begin{equation}
    \Delta c_j:=\frac{\epsilon_j^s+\epsilon_j^e}{2},
    \qquad
    \Delta r_j:=\frac{\epsilon_j^e-\epsilon_j^s}{2},
    \qquad
    \widetilde c_j=c_j+\Delta c_j,
    \qquad
    \widetilde r_j=r_j+\Delta r_j .
\end{equation}
Assume the normalization does not degenerate on the perturbed radius range,
\begin{equation}
    C_{\min,j}:=
    \inf_{\rho\in[r_j-\delta,r_j+\delta]} |C_\rho|>0 .
\end{equation}
Then
\begin{equation}
\left|
\attnscore_{i,j}(\widetilde c_j,\widetilde r_j)
-
\attnscore_{i,j}(c_j,r_j)
\right|
\le
\frac{\left\lVert\tokenq_i\right\rVert\left\lVert\tokenk_j\right\rVert\delta}{r_j-\delta}
\left(
\frac{3}{C_{\min,j}}+
\frac{2}{C_{\min,j}^2}
\right).
\label{eqn:rote_expected_noise_bound_simple}
\end{equation}
Therefore, if $\delta\le r_j/2$ and $C_{\min,j}$ is bounded away from zero, \lmrope has expected
noise sensitivity $O(\left\lVert\tokenq_i\right\rVert\left\lVert\tokenk_j\right\rVert\delta/r_j)$. By contrast,
point-wise \rope and boundary-only \dote have local worst-case timestamp sensitivity
$O(\left\lVert\tokenq_i\right\rVert\left\lVert\tokenk_j\right\rVert\theta_{\max}\delta)$, where
$\theta_{\max}:=\max_\ell\theta_\ell$, with no decay in the interval radius $r_j$.
\end{theorem}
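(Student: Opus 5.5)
We reduce the problem to bounding the perturbation of a single normalized block-diagonal operator. Put $\mathbf{u}:=\R_{m_i}\tokenq_i$ and write $F(c,r):=C_r^{-1}\R_{c,r}$. Then $\attnscore_{i,j}(c,r)=\mathbf{u}^\top F(c,r)\tokenk_j$, and since $\R_{m_i}$ is orthogonal, $\lVert\mathbf{u}\rVert=\lVert\tokenq_i\rVert$. The difference of scores is therefore bounded by $\lVert\tokenq_i\rVert\lVert\tokenk_j\rVert\,\lVert F(\widetilde c_j,\widetilde r_j)-F(c_j,r_j)\rVert_{\mathrm{op}}$. Because everything is block diagonal, the operator norm equals the maximum over the $2\times 2$ blocks. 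We split the change into a center part and a radius part, $F(\widetilde c,\widetilde r)-F(c,r)=[F(\widetilde c,\widetilde r)-F(c,\widetilde r)]+[F(c,\widetilde r)-F(c,r)]$, and then split the radius part further into a numerator change (the sinc factor) and a normalization change (the factor $C_r^{-1}$).

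The key tool is the integral representation from the proof of \cref{cor:rote_uniform}: $\sinc(\theta r)\A_{\theta,c}=\frac{1}{2r}\int_{c-r}^{c+r}\A_{\theta,\tau}\,\diff\tau$. This lets us avoid any dependence on $\theta$.

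\textbf{Center term.} The block difference is $\frac{1}{2\widetilde r}\bigl(\int_{\widetilde c-\widetilde r}^{\widetilde c+\widetilde r}-\int_{c-\widetilde r}^{c+\widetilde r}\bigr)\A_{\theta,\tau}\,\diff\tau$. This is an integral over the symmetric difference of two intervals, of total length at most $2|\Delta c|$, with an integrand of norm $1$. Its norm is therefore at most $|\Delta c|/\widetilde r\le\delta/(r-\delta)$. Dividing by $C_{\widetilde r}$ gives a contribution of $\delta/((r-\delta)C_{\min})$.

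\textbf{Radius numerator term.} This term is $|\sinc(\theta\widetilde r)-\sinc(\theta r)|$. Using the same representation, write $\frac{1}{2\widetilde r}\int_{|\tau-c|\le\widetilde r}-\frac{1}{2r}\int_{|\tau-c|\le r}$ and split it into two pieces:
\begin{itemize}
\item a change of domain, of length $2|\Delta r|$, weighted by $1/(2\widetilde r)$;
\item a change of weight, $|1/(2\widetilde r)-1/(2r)|\cdot 2r=|\Delta r|/\widetilde r$.
\end{itemize}
Together these give at most $2|\Delta r|/\widetilde r\le 2\delta/(r-\delta)$, and dividing by $C_{\min}$ contributes $2\delta/((r-\delta)C_{\min})$. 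Combined with the center term, this accounts for the $3/C_{\min}$ in the bound.

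\textbf{Normalization term.} We need $|C_{\widetilde r}^{-1}-C_r^{-1}|\le|C_{\widetilde r}-C_r|/C_{\min}^2$. Since $C_\rho$ is the average of $\sinc(\theta_\ell\rho)$ over $\ell$, each sinc difference is at most $2\delta/(r-\delta)$ by the numerator estimate. We then multiply by $|\sinc|\le1$.

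\textbf{Main obstacle.} The delicate point is the bookkeeping of which radius, $r$ or $\widetilde r$, appears in each denominator, so that every term is uniformly controlled by $\delta/(r-\delta)$ without picking up any factor of $\theta$. Working throughout with the integral representation rather than derivatives of sinc handles this, which is why we use it everywhere. One further caveat is that $C_{\min}$ being defined as the infimum over $[r_j-\delta,r_j+\delta]$ covers both $C_r$ and $C_{\widetilde r}$.

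\textbf{Consequences and contrast.} For $\delta\le r/2$ we have $r-\delta\ge r/2$, which gives the $O(\delta/r)$ corollary. For the contrast claim, note that the \rope or \dote score is $\sum_\ell$ (block of $\mathbf{u},\tokenk$) $\cdot\A_{\theta_\ell,t}$. Its derivative in $t$ has norm up to $\theta_{\max}\lVert\tokenq\rVert\lVert\tokenk\rVert$, and this is attained by choosing $\mathbf{u}$ and $\tokenk$ concentrated on the top-frequency block with perpendicular phases. This yields a first-order local sensitivity of $\theta_{\max}\delta$ that is independent of $r$.
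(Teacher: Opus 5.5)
Your proof is correct and reaches exactly the paper's constant, but by a different route.

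\textbf{The paper's route.} The paper differentiates $\Phi(c,r)=C_r^{-1}\mathbf{R}_{c,r}$. It bounds $\lVert\partial_c\mathbf{R}_{c,r}\rVert_{\mathrm{op}}\le 1/r$ and $\lVert\partial_r\mathbf{R}_{c,r}\rVert_{\mathrm{op}}\le 2/r$, using the identity $\theta\,\sinc'(\theta r)=(\cos(\theta r)-\sinc(\theta r))/r$, and deduces $|C_r'|\le 2/r$. It then integrates $D\Phi$ along the straight segment from $(c_j,r_j)$ to $(\widetilde c_j,\widetilde r_j)$, which is an operator-valued mean-value inequality, taking suprema over that segment via $\rho\ge r_j-\delta$.

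\textbf{Your route.} You telescope into two pieces. The first is a pure center shift at fixed $\widetilde r_j$. The second is a radius change at fixed $c_j$, which you split into a sinc part and a normalization part. You bound each finite difference directly from the averaging representation: the measure of a symmetric difference of intervals, divided by the interval length. Your three contributions, $1/C_{\min,j}$, $2/C_{\min,j}$ and $2/C_{\min,j}^2$, match the paper's term by term.

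\textbf{What your route buys.}
\begin{itemize}
\item It avoids any calculus on $\sinc$.
\item It makes the $\theta$-independence transparent. It literally formalizes the ``marginal portion of the integration domain'' heuristic from the main text. In effect it is a total-variation estimate between the two uniform kernels, which suggests how it would extend to other kernels.
\item It only needs $|C_\rho|\ge C_{\min,j}$ at the two radii $r_j$ and $\widetilde r_j$, rather than along the whole segment. The stated hypothesis covers this anyway, so this is a minor gain.
\end{itemize}

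\textbf{What the paper's route buys.} The derivative bounds give pointwise local Lipschitz constants. These feed directly into the low-pass remark $\theta_\ell|\sinc(\theta_\ell r)|\le 1/r$. They also put \lmrope and \rope on the same footing: $1/r$ versus $\theta_{\max}$.

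\textbf{The contrast claim.} Your argument here is slightly stronger than the paper's. You exhibit $\tokenq,\tokenk$ on the top-frequency block that attain $\theta_{\max}\lVert\tokenq\rVert\lVert\tokenk\rVert$, so you show the \rope rate is sharp. The paper only proves the upper bound $\lVert\mathbf{R}_{t+\epsilon}-\mathbf{R}_t\rVert_{\mathrm{op}}\le\theta_{\max}|\epsilon|$.
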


\begin{proof}[Proof of \cref{thm:rote_noise_robustness}]
Let
\begin{equation}
    \Phi(c,r):=C_r^{-1}\mathbf{R}_{c,r},
    \qquad
    \attnscore_{i,j}(c,r)
    =
    \mathrm{RoPE}(\tokenq_i,m_i)^\top \Phi(c,r)\tokenk_j .
\end{equation}
Since \rope is an orthogonal block rotation,
$\left\lVert\mathrm{RoPE}(\tokenq_i,m_i)\right\rVert=\left\lVert\tokenq_i\right\rVert$.

We first bound the derivatives of $\Phi$. For each frequency block,
\begin{equation}
    \mathbf{R}_{c,r}^{(\ell)}=
    \sinc(\theta_\ell r)
    \begin{bmatrix}
        \cos(\theta_\ell c) & -\sin(\theta_\ell c)\\
        \sin(\theta_\ell c) & \cos(\theta_\ell c)
    \end{bmatrix}.
\end{equation}
The rotation matrix has operator norm one and $|\sinc(x)|\le 1$, so
\begin{equation}
    \left\lVert\mathbf{R}_{c,r}\right\rVert_{\mathrm{op}}\le 1 .
\end{equation}
For $r>0$, blockwise differentiation gives
\begin{equation}
    \left\lVert\partial_c \mathbf{R}_{c,r}\right\rVert_{\mathrm{op}}
    =
    \max_\ell \theta_\ell |\sinc(\theta_\ell r)|
    =
    \max_\ell \frac{|\sin(\theta_\ell r)|}{r}
    \le \frac{1}{r} .
\label{eqn:appendix_dc_R_bound}
\end{equation}
Similarly, since
\begin{equation}
    \theta\,\sinc'(\theta r)
    =
    \frac{\cos(\theta r)-\sinc(\theta r)}{r},
\end{equation}
we have
\begin{equation}
    \left\lVert\partial_r \mathbf{R}_{c,r}\right\rVert_{\mathrm{op}}
    =
    \max_\ell |\theta_\ell\sinc'(\theta_\ell r)|
    \le
    \frac{2}{r} .
\label{eqn:appendix_dr_R_bound}
\end{equation}
Moreover,
\begin{equation}
    C_r=\frac{2}{d}\sum_{\ell=1}^{d/2}\sinc(\theta_\ell r)
    \quad\Longrightarrow\quad
    |C'_r|
    \le
    \frac{2}{d}\sum_{\ell=1}^{d/2}|\theta_\ell\sinc'(\theta_\ell r)|
    \le
    \frac{2}{r} .
\label{eqn:appendix_C_derivative_bound}
\end{equation}
On $\rho\in[r_j-\delta,r_j+\delta]$, we have $|C_\rho|\ge C_{\min,j}$ and
$\rho\ge r_j-\delta$. Hence
\begin{equation}
    \left\lVert\partial_c\Phi(c,\rho)\right\rVert_{\mathrm{op}}
    \le
    \frac{1}{C_{\min,j}(r_j-\delta)},
\label{eqn:appendix_dc_Phi_bound}
\end{equation}
and, using
$\partial_r\Phi=C_r^{-1}\partial_r\mathbf{R}_{c,r}-C'_r C_r^{-2}\mathbf{R}_{c,r}$,
\begin{equation}
    \left\lVert\partial_r\Phi(c,\rho)\right\rVert_{\mathrm{op}}
    \le
    \frac{1}{r_j-\delta}
    \left(
        \frac{2}{C_{\min,j}}+
        \frac{2}{C_{\min,j}^2}
    \right).
\label{eqn:appendix_dr_Phi_bound}
\end{equation}

Now fix one realization of the endpoint perturbation. Write
$x=(c_j,r_j)$ and $\Delta=(\Delta c_j,\Delta r_j)$. Since $|\Delta r_j|\le\delta$, every point on the
line segment $x+\tau\Delta$, $\tau\in[0,1]$, has radius in $[r_j-\delta,r_j+\delta]$. By the
fundamental theorem of calculus in the finite-dimensional matrix space,
\begin{align}
    \Phi(x+\Delta)-\Phi(x)
    &=
    \int_0^1 D\Phi(x+\tau\Delta)[\Delta] \,d\tau \nonumber\\
    &=
    \int_0^1
    \left(
        \Delta c_j\,\partial_c\Phi(x+\tau\Delta)
        +
        \Delta r_j\,\partial_r\Phi(x+\tau\Delta)
    \right)
    \,d\tau .
\label{eqn:appendix_line_integral}
\end{align}
Taking operator norms and using the triangle inequality gives
\begin{align}
    \left\lVert\Phi(x+\Delta)-\Phi(x)\right\rVert_{\mathrm{op}}
    &\le
    |\Delta c_j|\sup_{\tau\in[0,1]}
    \left\lVert\partial_c\Phi(x+\tau\Delta)\right\rVert_{\mathrm{op}}
    \nonumber\\
    &\quad+
    |\Delta r_j|\sup_{\tau\in[0,1]}
    \left\lVert\partial_r\Phi(x+\tau\Delta)\right\rVert_{\mathrm{op}}
    \nonumber\\
    &\le
    \frac{1}{r_j-\delta}
    \left[
        \frac{|\Delta c_j|}{C_{\min,j}}
        +
        \left(
            \frac{2}{C_{\min,j}}+
            \frac{2}{C_{\min,j}^2}
        \right)|\Delta r_j|
    \right].
\label{eqn:appendix_Phi_perturb_bound}
\end{align}
This is the operator-valued mean-value inequality; it does not require an equality-form mean value
theorem for the operator norm.

Therefore,
\begin{align}
&\left|
\attnscore_{i,j}(\widetilde c_j,\widetilde r_j)
-
\attnscore_{i,j}(c_j,r_j)
\right| \nonumber\\
&\quad=
\left|
\mathrm{RoPE}(\tokenq_i,m_i)^\top
\left(\Phi(\widetilde c_j,\widetilde r_j)-\Phi(c_j,r_j)\right)
\tokenk_j
\right| \nonumber\\
&\quad\le
\left\lVert\tokenq_i\right\rVert
\left\lVert\tokenk_j\right\rVert
\left\lVert\Phi(\widetilde c_j,\widetilde r_j)-\Phi(c_j,r_j)\right\rVert_{\mathrm{op}} .
\end{align}
 Since
$|\epsilon_j^s|,|\epsilon_j^e|\le\delta$ almost surely, we also have
$|\Delta c_j|,|\Delta r_j|\le\delta$, which gives \cref{eqn:rote_expected_noise_bound_simple}.

Finally, for point-wise \rope, let $\mathbf{R}_t$ be the standard RoPE rotation. Blockwise
differentiation gives
\begin{equation}
    \left\lVert\frac{d}{dt}\mathbf{R}_t\right\rVert_{\mathrm{op}}
    =
    \max_\ell \theta_\ell
    =
    \theta_{\max} .
\end{equation}
Hence the same line-integral argument yields
\begin{equation}
    \left\lVert\mathbf{R}_{t+\epsilon}-\mathbf{R}_t\right\rVert_{\mathrm{op}}
    \le
    \theta_{\max}|\epsilon|,
\end{equation}
and this local worst-case Lipschitz scale is independent of any interval radius. The boundary-only
\dote encoder is a concatenation of two point-wise encoders at $t^s$ and $t^e$, so it has the same
order of timestamp sensitivity and no $1/r_j$ decay.
\end{proof}

\subsection{Algorithm}
Here is the demonstrative code for \TIE:
\begin{lstlisting}[style=pythonstyle]
def RoTE(start, end, theta=10000.0, scaling_factor=4.0, alpha=1.0, dim=128):
  # Assuming start, end are torch.Tensor of shape (b, t)
  center = (start + end) * scaling_factor / 2.0
  radius = (end - start) * scaling_factor / 2.0
  freqs = 1.0 / (theta ** (torch.arange(0, dim, 2).double() / dim)).to(start.device)
  theta = torch.einsum("bi,j->bij", center, freqs)
  cos, sin = theta.cos(), theta.sin()
  phi = torch.einsum("bi,j->bij", radius, freqs)
  sincs = torch.sinc(alpha * phi/torch.pi)
  sincs = sincs / torch.mean(sincs, dim=-1, keepdim=True)
  cos, sin = cos * sincs, sin * sincs
  return cos[:, :, None, :], sin[:, :, None, :]
  
def RoTE_apply(x, cos, sin, num_heads):
  x = rearrange(x, "b s (n u h) -> b s n u h", n=num_heads, u=2)
  x_out = torch.cat([x[:, :, :, 0, :] * cos - x[:, :, :, 1, :] * sin, x[:, :, :, 0, :] * sin + x[:, :, :, 1, :] * cos], dim=-1).flatten(2)
  return x_out.to(x.dtype)
\end{lstlisting}
\label{append:algorithm}
\subsection{\lmrope Visualization}
\Cref{fig:rote_visualization} \& \ref{fig:rope_decaying} show a visualization case for \lmrope decaying. Events with longer duration decay slower along the time interval.
\begin{figure*}[ht]
    \centering
    \includegraphics[width=0.9\linewidth]{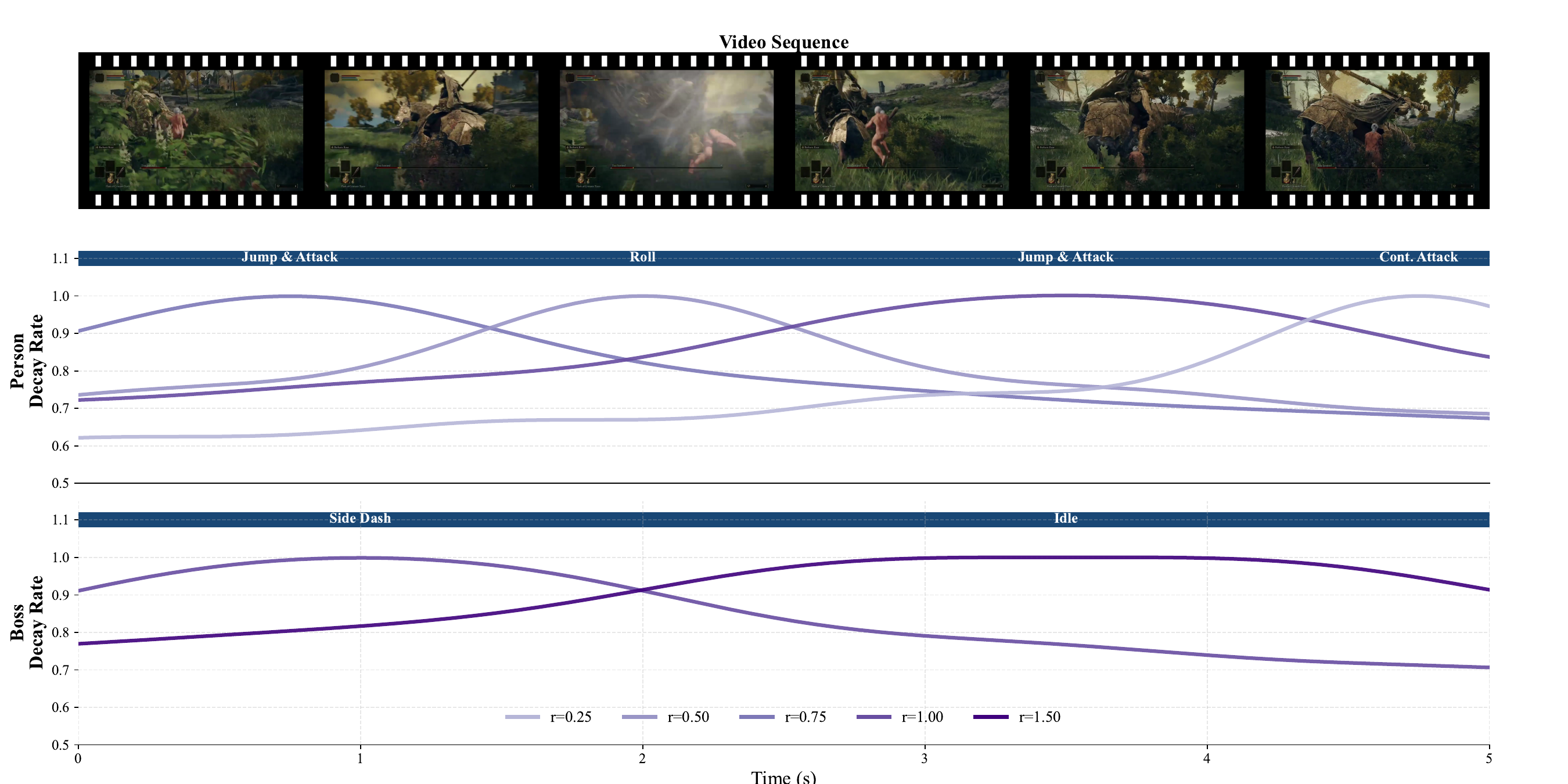}
    \captionsetup{justification=justified, singlelinecheck=true}
    \caption{\lmrope Visualization: The curves show the decaying effect of each event.}
    \label{fig:rote_visualization}
    \vspace{-1.0em}
\end{figure*}
\begin{figure}[htb]
    \centering
    \hspace{30pt}
    \includegraphics[width=0.6\linewidth]{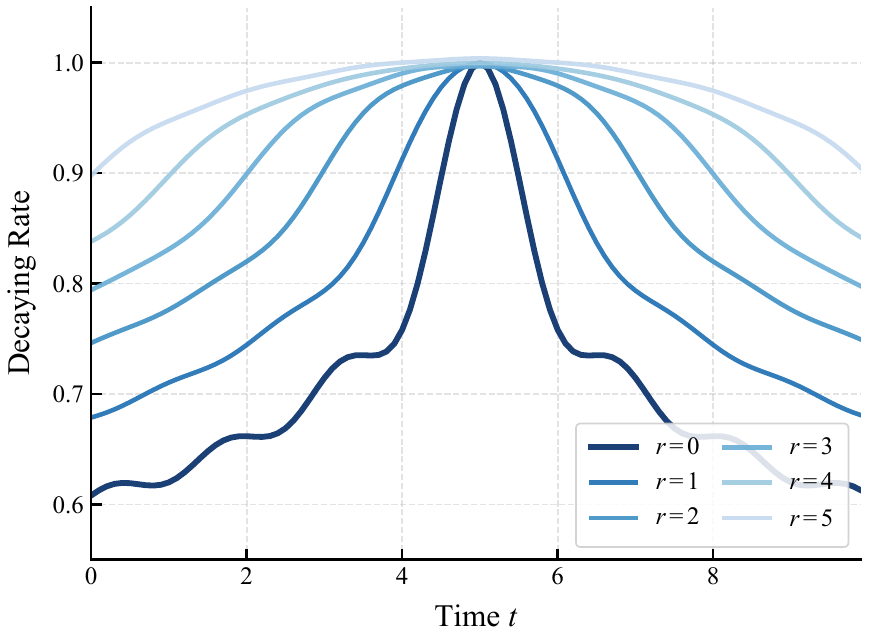}\vspace{-0.8em}
    \captionsetup{justification=justified, singlelinecheck=true}
    \caption{\lmrope Decaying Rate: We visualize the relative decaying rate of our \lmrope when $\gamma=4.0$. For a video clip of 10s long, we fix the middle point of the event at $5.0$s and vary the radius $r$ of event. When $r=0$, \lmrope generally reduces to normal \rope. While a larger $r$ dampens the decaying rate. For a long event that sustains for 10 seconds ($r=5$), the decaying rate remains above 0.9 during the whole event duration. }
    \label{fig:rope_decaying}
\vspace{-1.5em}
\end{figure}
\section{Dataset}
\begin{figure*}[htb]
    \centering
    \includegraphics[width=0.9\linewidth]{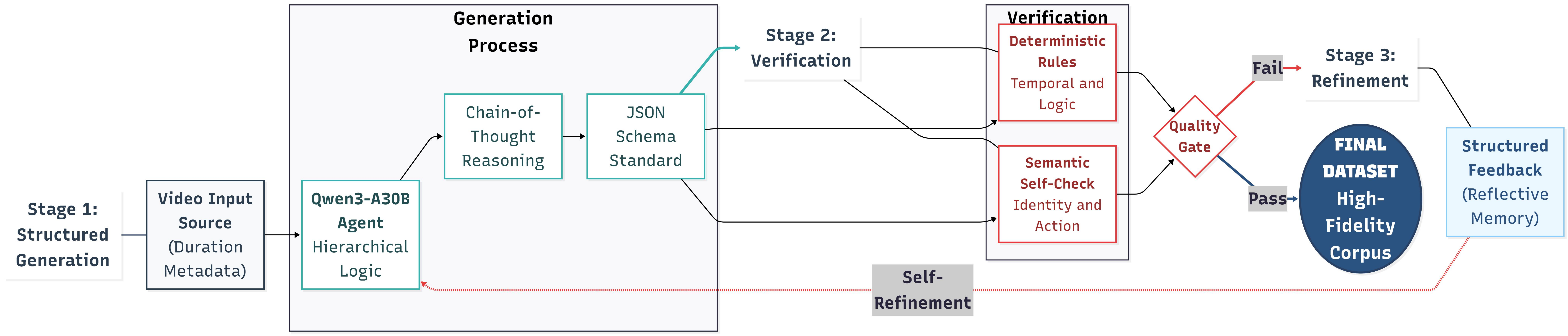}
    \captionsetup{justification=justified, singlelinecheck=true}
    \caption{\textbf{Overview of the Self-Reflective Video Annotation Pipeline.} The system constructs high-fidelity datasets through three iterative stages: (1) \textbf{Structured Generation}, employing Qwen3-A30B with \textbf{CoT reasoning} and hierarchical decomposition under a strict \textbf{JSON schema}; (2) \textbf{Dual-Track Verification}, which enforces \textbf{deterministic constraints} (temporal/logical filters) and \textbf{semantic self-checks} to ensure identity and action fidelity; and (3) a \textbf{Refinement Loop}, where detected failures trigger \textbf{agentic self-correction} via structured error feedback. This workflow effectively eliminates temporal hallucinations and ensures the plausibility of the final corpus.}
    \label{fig:dataset_workflow}
\end{figure*}
\begin{figure*}[http]
    \centering
    \hspace{30pt}
    \includegraphics[width=1.0 \linewidth]{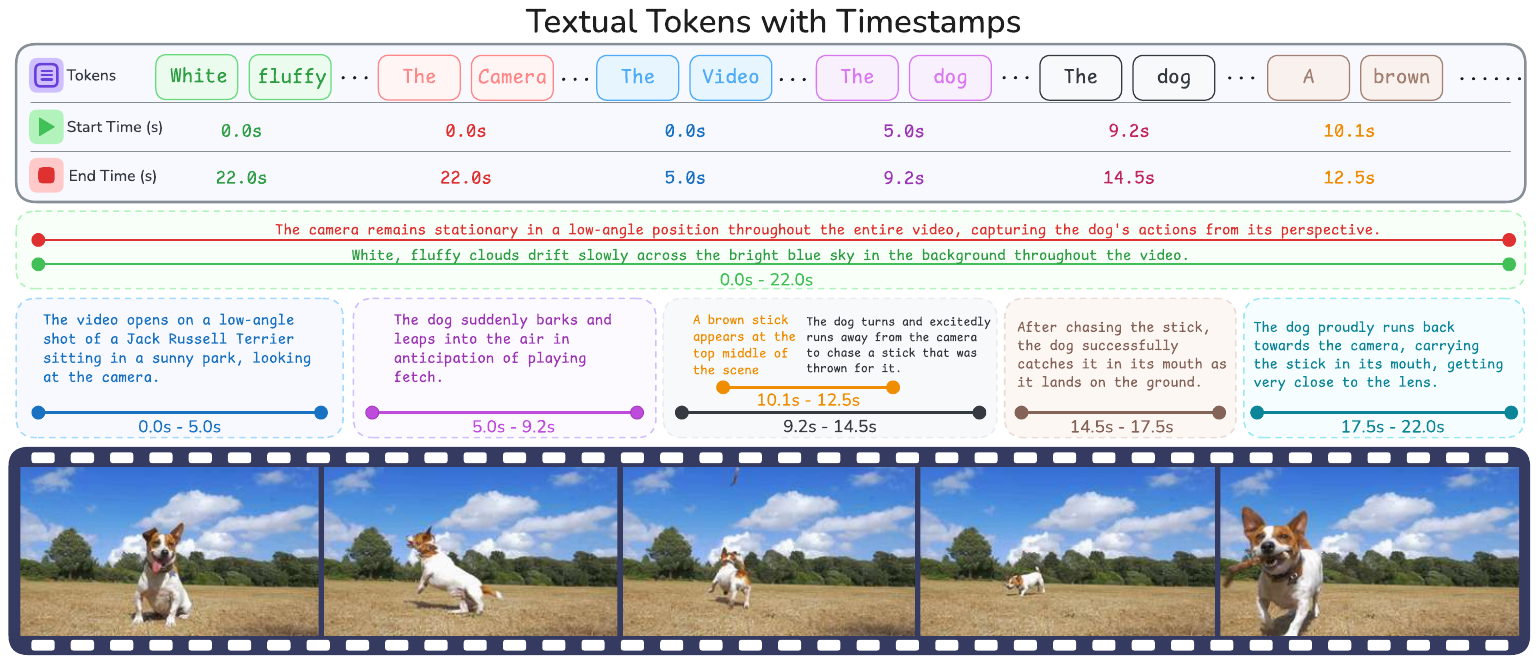}
    \captionsetup{justification=justified, singlelinecheck=true}
    \caption{Textual Tokens with Timestamps: The captions for videos are structured and timestamped, which could be utilized to annotate every token with an accurate start time and end time. In this case, the caption describes the scene, the characters and objects, the background, behaviors and camera motions in detail. Every event is annotated with timestamps of start and end at the accuracy of \(0.25\text{s}\).}
    \label{fig:textual_timestamps}
    \vspace{-0.5em}
\end{figure*}
\subsection{Data Construction via Self-Reflective Agent}
Since interval-based modeling requires temporally precise and structured event supervision, we construct a large-scale dataset using a self-reflective annotation agent.
\subsubsection{The Structured Generation}
We employ a large-scale vision-language model (Qwen3-A30B \cite{qwen3technicalreport}) to generate dense video annotations. To ensure data parseability and consistency, we enforce a strict JSON schema rather than allowing free-form textual output. The prompt is designed to guide the model through a hierarchical decomposition of the video content: it must first explicitly define participant identities (e.g., tracking visual attributes like clothing) and global scene context before generating detailed event timelines. Additionally, we leverage Chain-of-Thought (CoT) prompting \cite{Wei2022ChainOT}, requiring the model to explicitly reason about visual anchors and temporal continuity before finalizing the event logs. Details about the prompt can be found in \cref{append:data}.

\subsubsection{The Verification Criteria}
All verification criteria are designed to enforce consistency with the interval-based event formulation introduced in~\cref{sec: RoTE}.
To mitigate the common issue of temporal \textbf{hallucination} in VLMs, we implement a verification mechanism anchored by ground-truth metadata. Crucially, we explicitly inject the precise video duration (extracted via FFmpeg) into the system prompt. This serves as a hard boundary, preventing the model from generating events that exceed the physical limits of the video.

Building on this ground truth, we enforce a set of deterministic constraints:
\paragraph{Temporal Alignment} All timestamps are strictly quantized to a 0.25s grid. This discretization is specifically designed to align with the temporal granularity of our downstream video generation backbone (operating at 16 fps), ensuring that text-conditioned events map precisely to video frame blocks. \Cref{fig:textual_timestamps} gives an example of the timestamps.
\paragraph{Validity Checks} We apply a series of logical filters to ensure the physical and semantic plausibility of the annotations. First, we enforce that every event's end time must strictly follow its start time, eliminating invalid negative durations. Second, to ensure clear action boundaries, we prohibit overlapping events within the same action track to ensure atomic action definition. This guarantees that distinct actions do not conflict in the timeline. Third, we impose a minimum duration threshold of 1.0s. This filter removes fleeting micro-actions that are too brief for stable video generation, ensuring that every annotated event represents a significant and renderable motion phase.
\paragraph{Semantic Consistency} While rule-based checks ensure structural validity, they cannot detect semantic errors (e.g., hallucinated objects or identity swaps). To address this, we introduce a semantic self-verification step. In this phase, we feed the generated timeline back into the VLM alongside the original video, instructing the model to act as an independent critic. The VLM is tasked with verifying two critical aspects:
\begin{itemize}
    \item \textbf{Identity Consistency} It checks whether the visual attributes described in the text (e.g., "person in the red shirt") remain consistent with the pixel-level visual information throughout the video, flagging any identity switches. 
    \item \textbf{Action Fidelity} It assesses whether the captioned actions (e.g., "picking up a cup") accurately reflect the actual motion dynamics observed in the video frames. If the VLM detects a mismatch, it outputs a specific error description, which triggers the refinement loop.
\end{itemize}
This semantic verification step improves annotation reliability but is not required at inference time.

\subsubsection{The Refinement Loop}
This mechanism serves as the core of our agentic workflow, enabling the system to self-correct rather than simply discarding imperfect generations. When an annotation fails either the programmatic validity checks or the semantic self-verification, the specific error message (e.g., "Event 3 duration 0.5s is less than 1.0s threshold" or "Identity mismatch at 4.5s") is formulated as structured feedback. We inject this feedback directly into the prompt for the next inference cycle, explicitly instructing the VLM to reflect on its previous error.

\subsection{Construction of the  PexelsEvents}
\label{append:data}
Here is our annotation structure for PexelsEvents:
\begin{lstlisting}[language=jsonformat]
{"scene_clarity": "Clear|Cluttered|Chaotic",
"global_caption": {
"short_caption": "Concise summary of scene (NO PEOPLE/ACTION), ~20 words.",
"long_caption": "Detailed description of environment and static objects (NO PEOPLE), ~60 words.",
"visible_objects": ["list", "of", "all", "distinct", "objects", "visible"],
"start_time": 0.0,
"end_time": <duration:.2f>},
"participants": {
"participant_id": {
"short_description": "Short description starting with ID.",
"long_description": "Detailed description starting with ID, distinguishing features.",
"start_time": "float (Exact time of first appearance)",
"end_time": "float (Exact time of last disappearance)",
"timeline": [
{
"distinguishing_feature": "Visual check (e.g., 'red shirt').",
"start_time": float,
"end_time": float,
"is_interaction": boolean,
"interaction_with": ["other_id"],
"short_caption": "Verb + object...",
"long_caption": "Detailed phase log starting with ID. Max 50 words. Include start/end positions and specific body part used."
}
]
}
},
"statistics": {
"video_quality_score": 1-5,
"max_events_count": int,
"participants_with_max_events": ["id"],
"is_suitable_for_learning": boolean
}
}
\end{lstlisting}
Figure \ref{fig:dataset_case} shows an example from PexelsEvents.
\begin{figure*}[htb]
    \centering
    \includegraphics[width=0.9\linewidth]{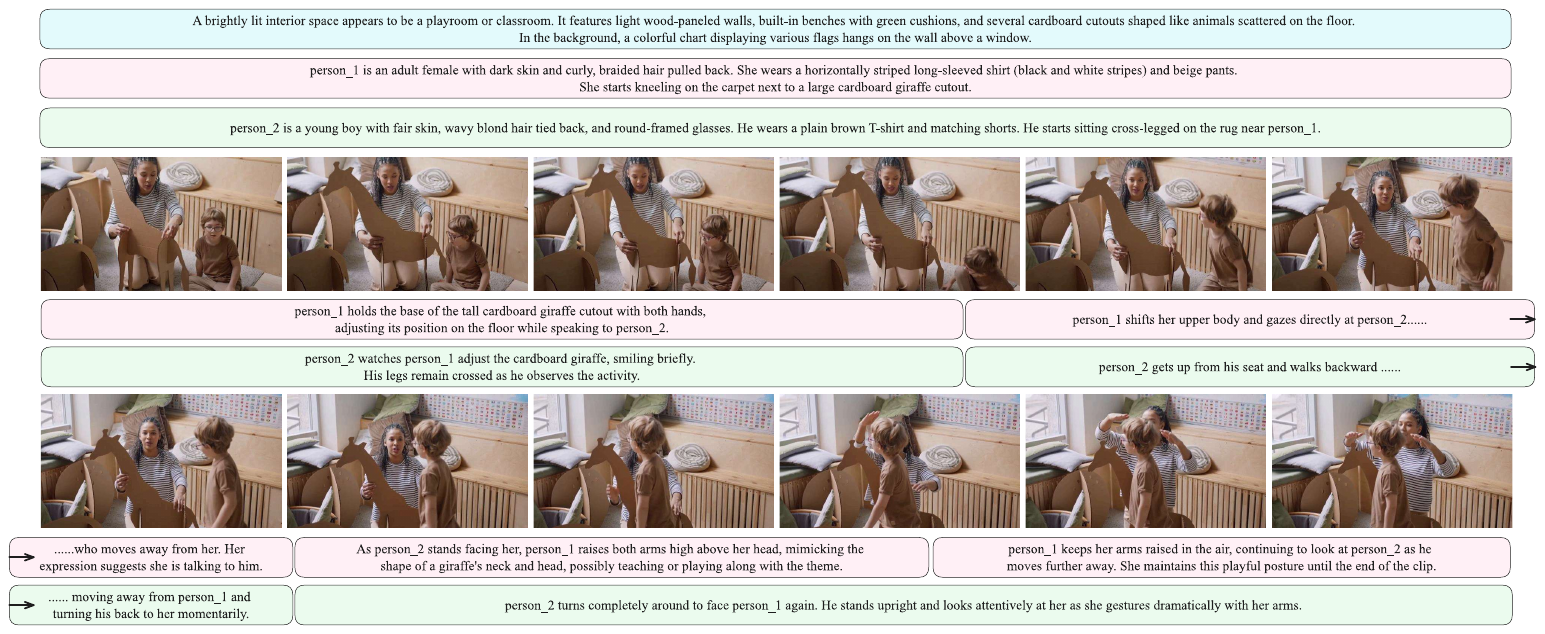}
    \captionsetup{justification=justified, singlelinecheck=true}
    \caption{An example from \textit{PexelsEvents}: global descriptions and event-level descriptions are provided.}
    \label{fig:dataset_case}
    \vspace{-1.5em}
\end{figure*}
\subsection{Construction of the GameEvents}
We collect 128 hours of Elden Ring gameplay videos. Videos are recorded by OBS. All behaviors of player and bosses are recorded by reading the real-time animation ID from the game memory. Then we transform these records into 80k 10-second video clips with structured annotations. \Cref{fig:eldenring_data} shows an example data containing high-dynamic fast-speed interactions of the player and the boss.

\begin{figure*}[htb]
    \centering
    \includegraphics[width=1.0\linewidth]{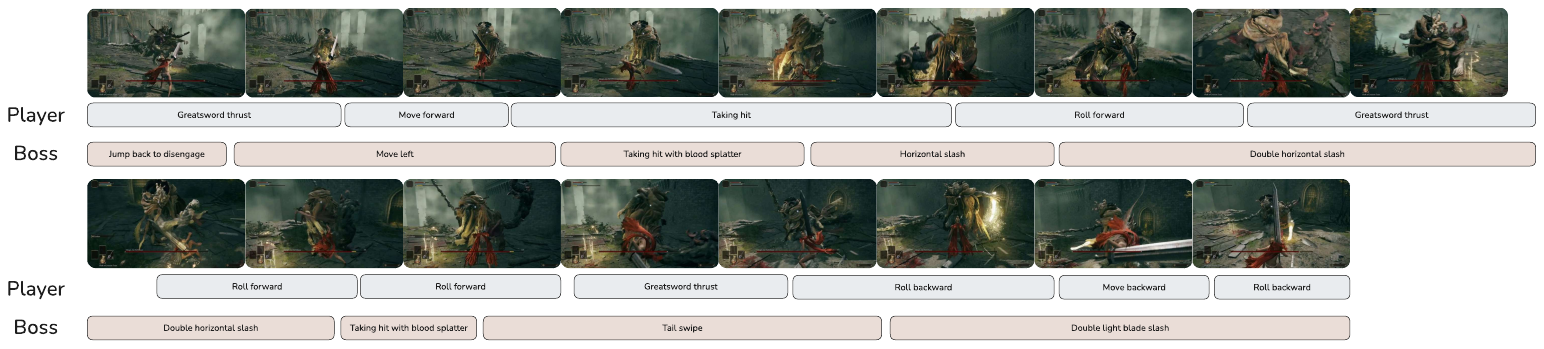}
    \captionsetup{justification=justified, singlelinecheck=true}
    \caption{An example from \textit{GameEvents}: representative animation sequences with detailed event annotations.}
    \label{fig:eldenring_data}
    \vspace{-1.0em}
\end{figure*}
\subsection{Construction of the RoboticsEvents}
We select 86k 10-second video clips from Agibot-World dataset. Which covers several manipulation tasks (including some scenes with human intervention). We filter the recorded trajectories and split them into sub-pieces. We feed VLM with information of these trajectory segments to generate more accurate structured prompts that describe behaviors of each arm and gripper. \Cref{fig:robot_data} shows an example data containing complex interactions of robot and human operator.
\begin{figure*}[htb]
    \centering
    \includegraphics[width=1.0\linewidth]{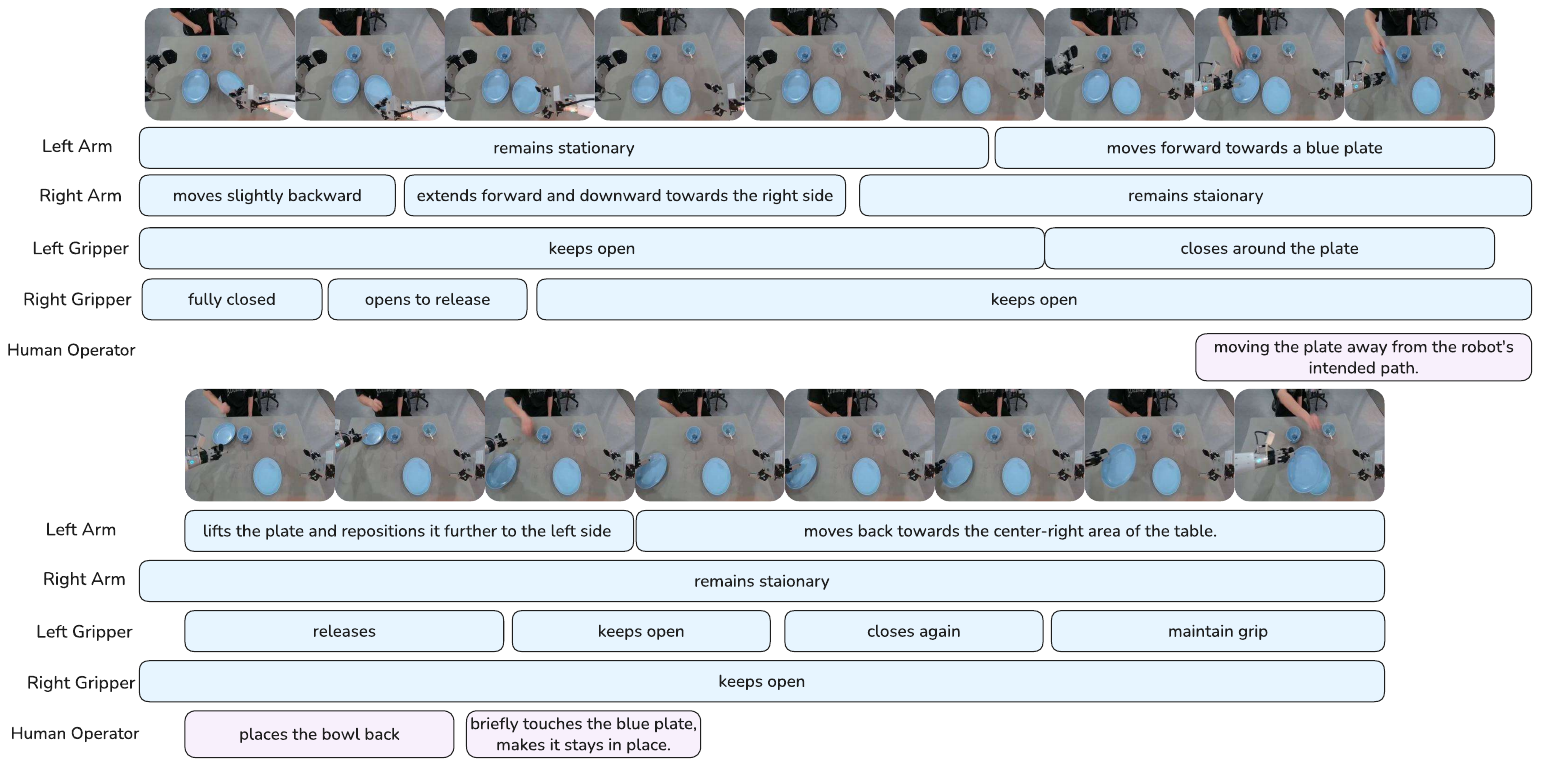}
    \captionsetup{justification=justified, singlelinecheck=true}
    \caption{An example from \textit{RoboticsEvents}: representative manipulation sequences with detailed event annotations.}
    \label{fig:robot_data}
    \vspace{-1.0em}
\end{figure*}
\subsection{Dataset Statistics}
We report several statistics about our \omnievent~dataset in \Cref{tab:data_stat}. The Overlap Prob. indicates the probability that a single event could happen with another one at the same time. These three parts of \omnievent~covers from low-speed (\textit{PexelsEvents}) to high-speed (\textit{GameEvents}).
\begin{table*}[htb]
\centering
\small
\setlength{\tabcolsep}{4pt}
\begin{tabular}{c|c|cc|ccc|c}
\hline
Dataset & Video Clips & EC & ED & TE & TD & TL & Overlap Prob. \\
\hline
\textit{PexelsEvents} & 253,903 &  4.72 & 3.67 &1,197,973 & 4,391,589 &  164,639,876 & 68.0\%\\
\textit{GameEvents} & 79,959 & 16.01 & 1.24 & 1,280,208 & 1,584,762 & 42,368,009 & 99.63\%\\
\textit{RoboticsEvents} & 85,956 & 14.47 & 2.79 & 1,244,058 & 3,472,766 & 86,717,027 & 99.99\%\\
\hline
\end{tabular}

\caption{
Statistics of \omnievent. \textbf{EC} and \textbf{ED} denote the average number of events per sample and the average duration per event. \textbf{TE}, \textbf{TD}, and \textbf{TL} denote the total number of events, the total duration of all events, and the total character length of all text prompts, respectively. \textbf{Overlap Prob.} denotes the probability of event overlap.
}
\label{tab:data_stat}
\end{table*}
\section{Human Evaluation}\label{sec:human_eval}
\paragraph{Human annotation protocol.}
All temporal metrics are computed from human-verified event annotations rather than automatic detectors. 
We randomly sample 100 prompts from the evaluation dataset. 
For each prompt, we generate one video using the finetuned baseline and one video using our method with TIE, resulting in 200 generated videos in total. 
Each prompt contains a set of requested events with specified target intervals. 
For every generated video, human annotators are shown the video and the corresponding prompt event list. 
For each requested event, they are asked to answer three questions: 
(1) whether the event occurs in the video; 
(2) how much the observed event start time deviates from the requested start time; and 
(3) how much the observed event end time deviates from the requested end time. 
We employ 10 human annotators and compute the final metrics from the average of their responses. Figure~\ref{fig:ann_ui} gives the annotation user interface.
\begin{figure*}[!t]
    \centering
    \includegraphics[width=1.0\linewidth]{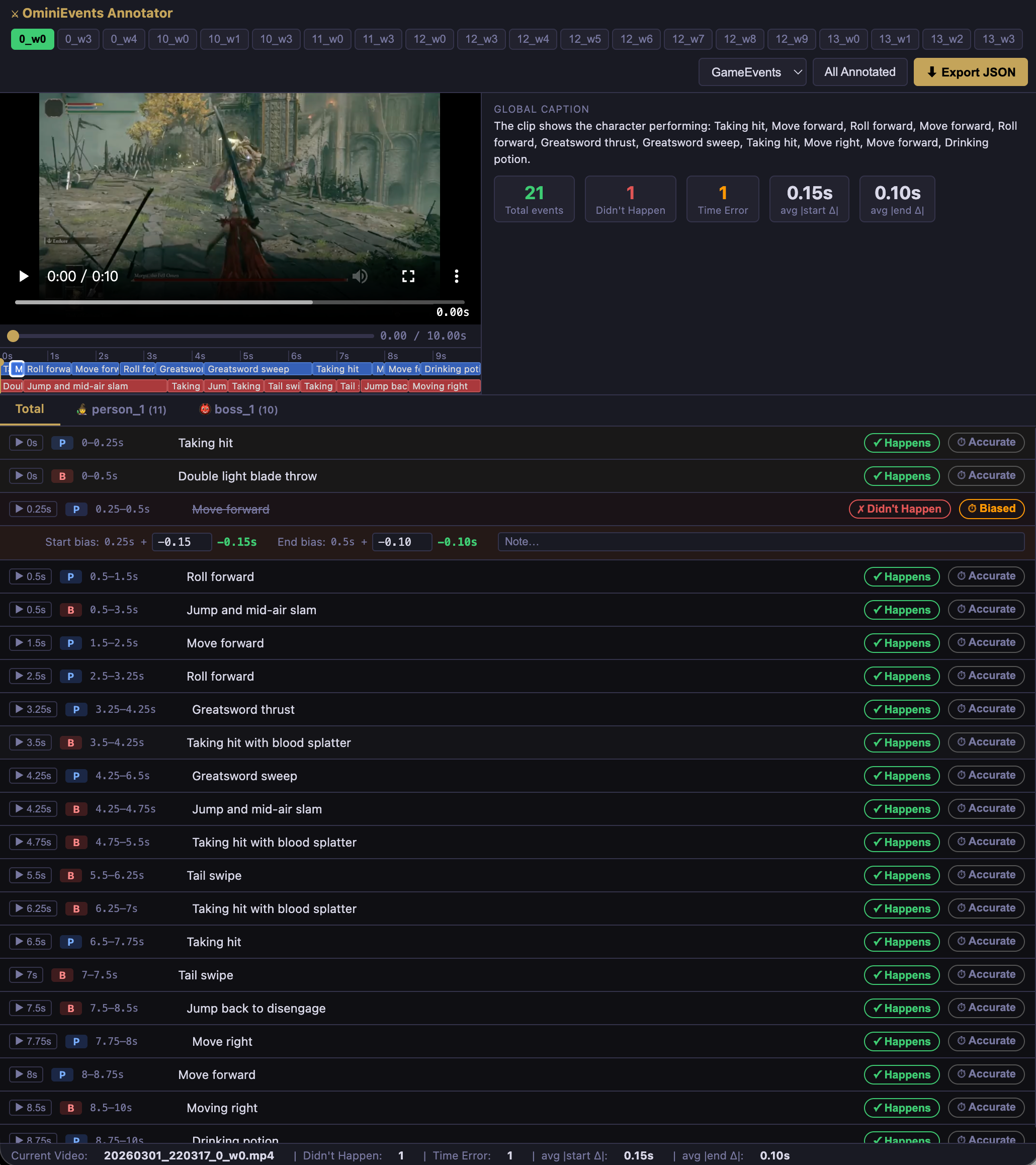}
    \captionsetup{justification=justified, singlelinecheck=true, skip=4pt}
    \caption{An example of the user interface for annotating temporal alignment.}
    \label{fig:ann_ui}
\end{figure*}

\paragraph{Event representation.}
Each prompt contains a set of requested events 
$\mathcal{E}=\{e_i\}_{i=1}^{N}$.
Each event $e_i$ is specified by an event description, an actor, and a target time interval $[s_i,t_i]$.
For each generated video, annotators provide a binary occurrence label $o_i\in\{0,1\}$ indicating whether $e_i$ appears in the video. 
If $o_i=1$, annotators additionally provide the start-time bias $b^s_i$ and end-time bias $b^t_i$ with respect to the requested interval:
\[
b^s_i = \hat{s}_i - s_i, \quad b^t_i = \hat{t}_i - t_i,
\]
where $[\hat{s}_i,\hat{t}_i]$ is the observed event interval in the generated video. 
For each event, we average $o_i$, $b^s_i$, and $b^t_i$ over the 10 annotators before computing the final metrics.

\paragraph{Event Occurrence.}
Event Occurrence measures the probability that a requested event is realized in the generated video:
\[
\mathrm{Occ} = \frac{1}{N}\sum_{i=1}^{N} o_i.
\]
This metric captures the most basic requirement of temporal control: the model must first generate the requested event before its timing can be evaluated. 
We compute it over all requested prompt events, so missing events directly reduce the score.

\paragraph{Temporal Error.}
Temporal Error measures the temporal deviation between the requested interval and the human-observed interval. 
If an event occurs, we use the average absolute boundary deviation:
\[
\mathrm{TE}_i = \frac{|b^s_i| + |b^t_i|}{2}.
\]
If an event is missing, its boundary deviation is undefined. 
We therefore assign the requested event duration as a missing-event penalty:
\[
\mathrm{TE}_i = t_i - s_i, \quad \text{if } o_i=0.
\]
The final Temporal Error is averaged over all requested events:
\[
\mathrm{TE} = \frac{1}{N}\sum_{i=1}^{N}\mathrm{TE}_i.
\]
This prevents a model from receiving an artificially low timing error by simply omitting difficult events. 
The event duration provides a natural penalty scale: missing a short event is less severe than missing a long event, while still being counted as a temporal failure.

\paragraph{Order Accuracy.}
Order Accuracy measures whether the generated video preserves the before/after relations specified by the prompt. 
For each pair of requested events $(e_i,e_j)$ in the same prompt, we compare their target center times:
\[
c_i=\frac{s_i+t_i}{2}, \quad c_j=\frac{s_j+t_j}{2}.
\]
If $c_i<c_j$, then $e_i$ is expected to occur before $e_j$. 
For occurred events, we compute the observed centers using the annotated biases:
\[
\hat{c}_i=\frac{(s_i+b^s_i)+(t_i+b^t_i)}{2}, \quad
\hat{c}_j=\frac{(s_j+b^s_j)+(t_j+b^t_j)}{2}.
\]
The pair is counted as correct only if both events occur and the observed ordering matches the requested ordering:
\[
\mathbf{1}[o_i=1,o_j=1]
\cdot
\mathbf{1}[
\mathrm{order}(\hat{c}_i,\hat{c}_j)=
\mathrm{order}(c_i,c_j)
].
\]
If either event is missing, the pair is counted as incorrect. 
The final Order Accuracy is the fraction of correctly ordered event pairs:
\[
\mathrm{OrderAcc}=
\frac{
\sum_{i<j}
\mathbf{1}[o_i=1,o_j=1]
\cdot
\mathbf{1}[
\mathrm{order}(\hat{c}_i,\hat{c}_j)=
\mathrm{order}(c_i,c_j)
]
}{
\binom{N}{2}
}.
\]
This metric evaluates global temporal structure. 
A video may contain many requested events but still fail the prompt if their sequence is incorrect.

\paragraph{Overlap Accuracy.}
Overlap Accuracy measures whether events that are specified to overlap or occur concurrently in the prompt remain overlapping in the generated video. 
We first identify event pairs that overlap in the prompt:
\[
\mathcal{O}=
\{(i,j):\min(t_i,t_j)>\max(s_i,s_j)\}.
\]
For each pair $(i,j)\in\mathcal{O}$, we compute the observed intervals from the human-annotated biases:
\[
\hat{s}_i=s_i+b^s_i,\quad \hat{t}_i=t_i+b^t_i.
\]
The pair is counted as correct only if both events occur and their observed intervals overlap:
\[
\mathbf{1}[o_i=1,o_j=1]
\cdot
\mathbf{1}[
\min(\hat{t}_i,\hat{t}_j)>
\max(\hat{s}_i,\hat{s}_j)
].
\]
The final Overlap Accuracy is:
\[
\mathrm{OverlapAcc}=
\frac{1}{|\mathcal{O}|}
\sum_{(i,j)\in\mathcal{O}}
\mathbf{1}[o_i=1,o_j=1]
\cdot
\mathbf{1}[
\min(\hat{t}_i,\hat{t}_j)>
\max(\hat{s}_i,\hat{s}_j)
].
\]
If either event in an overlapping pair is missing, the pair is counted as incorrect. 
This metric is complementary to Order Accuracy: Order Accuracy evaluates before/after relations, while Overlap Accuracy evaluates concurrency preservation.

\paragraph{Temporal Constraint Satisfaction Rate.}
Temporal Constraint Satisfaction Rate (TCSR) summarizes whether temporal constraints are satisfied. 
This metric evaluates complete prompt-level temporal faithfulness as:
\[
\mathrm{TCSR}=
\frac{1}{|\mathcal{C}|}
\sum_{c\in\mathcal{C}}
\mathbf{1}[c\ \mathrm{is\ satisfied}].
\]
A constraint is satisfied only when the required event or event relation is realized and its temporal condition is met. 
For event interval constraints, this requires the event to occur and its start/end deviations to fall within the annotation tolerance, that is
 \begin{itemize}
     \item it is the start time of an interval constraint and the error from the ground truth start time is within 0.25s;
    \item or it is the end time of an interval constraint and the error from the ground truth end time is within 0.25s.
 \end{itemize}
For order constraints, both events must occur and their before/after relation must be preserved. 
For overlap constraints, both events must occur and their observed intervals must overlap. 

\paragraph{Rationale.}
These metrics are designed to jointly evaluate event realization, interval alignment, temporal ordering, concurrency, and full prompt-level constraint satisfaction. 
A key design choice is that missing events are explicitly penalized. 
Missing events reduce Event Occurrence, receive a duration-based penalty in Temporal Error, invalidate any related order or overlap pairs, and violate the corresponding TCSR constraints. 
This avoids a degenerate evaluation where a model appears temporally accurate only because it omits difficult events. 
By relying on human annotations of event occurrence and temporal deviations, the evaluation directly measures whether generated videos satisfy the intended prompt-level temporal structure.

\section{More Results}
\label{append:moreresults}
\subsection{Results on StoryBench}
For StoryBench, we rescale event timelines to 10 seconds and use the background description as a persistent event spanning the full video. \TIE outperforms previous multi-event video generation method on FVD, CLIP-Event, and VideoScore dimensions, indicating stronger temporal organization and visual fidelity.
\begin{table}[htb]
\centering
\small
\setlength{\tabcolsep}{4pt}
\begin{tabular}{c|cc|c|cccc}
\hline
Method & FID$\downarrow$ & FVD$\downarrow$ & CLIP-Event $\uparrow$ & VQ$\uparrow$ & TC$\uparrow$ & DD$\uparrow$ & TA$\uparrow$\\ \hline
MinT & 40.87 & 484.44 & 0.270 & 2.56 & 2.44 & 3.32 & 2.92\\
{\lmrope} & \textbf{23.58} & \textbf{236.45} & \textbf{0.290} & \textbf{3.39} & \textbf{3.21} & \textbf{3.57} & \textbf{3.14}\\ \hline
\end{tabular}
\caption{Text-to-Video Generation Result on StoryBench: The evaluation results of MinT come from their paper since they are not open-sourced.}
\label{tab:expv3_storybench}
\end{table}
\subsection{\TIE as Prompt Extender}
A key advantage of \TIE is its capacity for structured prompt extension. We use LLMs (e.g., Qwen3-Max) to expand simple VBench \cite{huang2023vbench} prompts into our structured event representation.
\begin{table*}[htb]
\centering
\footnotesize
\setlength{\tabcolsep}{2pt}
\begin{tabular}{c|ccc|cc|cc|cccccc}
\hline
method  & \tiny\begin{tabular}[c]{@{}l@{}}Dynamic\\Degree\end{tabular}  & \tiny\begin{tabular}[c]{@{}l@{}} Imaging\\ Quality\end{tabular}& \tiny\begin{tabular}[c]{@{}l@{}}Aesthetic\\ Quality\end{tabular}& \tiny\begin{tabular}[c]{@{}l@{}}Motion\\ Smooth.\end{tabular} &\tiny\begin{tabular}[c]{@{}l@{}}Human\\ Action\end{tabular} & \tiny\begin{tabular}[c]{@{}l@{}}Overall\\ Consist.\end{tabular} & \tiny\begin{tabular}[c]{@{}l@{}}Subject\\ Consist.\end{tabular}& \tiny Color &\tiny\begin{tabular}[c]{@{}l@{}}Object\\ Class\end{tabular}&\tiny\begin{tabular}[c]{@{}l@{}}Multiple\\ Objects\end{tabular}&\tiny\begin{tabular}[c]{@{}l@{}}Spatial\\ Relation.\end{tabular}&\tiny\begin{tabular}[c]{@{}l@{}}Tempo.\\ Style\end{tabular}&\tiny\begin{tabular}[c]{@{}l@{}}Appear.\\ Style\end{tabular}\\ \hline
Baseline                               &   0.375             &        \textbf{0.665}                                                  &      0.607                                                          &      0.985                                                  &                                    0.72 & 0.237                       &         0.938          &   \textbf{0.895} & 0.747 & 0.562 & 0.743 & 0.231 & 0.209                                        \\
WanExtender                     &      0.119              &           0.643      & \textbf{0.658}     & \textbf{0.993} & 0.962 & 0.266 & \textbf{0.948} & 0.830 & \textbf{0.925} & \textbf{0.797} & 0.780 & 0.243 & 0.219                          \\
{\lmrope} &  \textbf{0.763} & 0.623 & 0.599 & 0.979 &  \textbf{0.994} & \textbf{0.279} & 0.924 & 0.892 & 0.899 & 0.741 & \textbf{0.804} &  \textbf{0.254} & \textbf{0.247} \\ \hline
\end{tabular}
\caption{Prompt Extending Results: These metrics are separated into four blocks related to Visual Quality, Motion, Consistency and Prompt Faithfulness. We find that WanExtender increases visual quality at the expense of dynamic degree, often producing nearly static scenes. In contrast, the \TIE-enhanced variant significantly increases dynamic degree, retains comparable visual quality, and improves prompt faithfulness.}
\label{tab:prompt_extend_results}
\vspace{-1.0em}
\end{table*}
Compared to the official \textbf{WanExtender}, the \TIE-enhanced variant shows superior fidelity to long-horizon descriptions. As shown in \cref{tab:prompt_extend_results} and \cref{fig:prompt_extend_case}, it generates more dynamic content that faithfully follows detailed spatial and temporal relationships, indicating that \TIE provides a robust structured extension mechanism for complex video synthesis.

\begin{figure*}[htb]
    \centering
    \hspace{30pt}
    \includegraphics[width=1.0\linewidth]{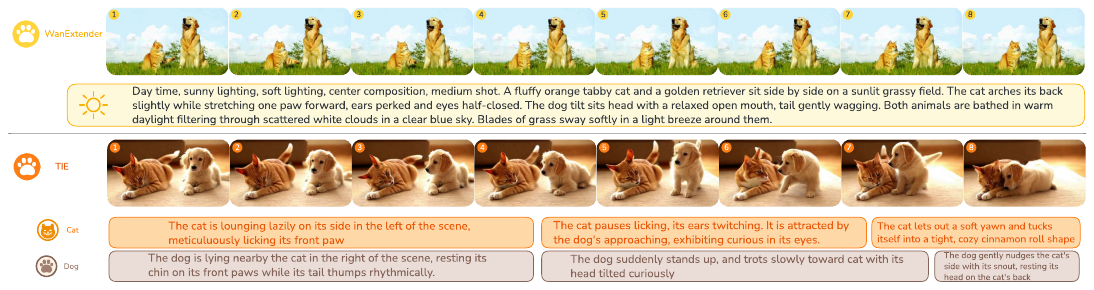}
    \captionsetup{justification=justified, singlelinecheck=true}
    \caption{Comparison Results between WanExtender and the \TIE-enhanced variant: the original short prompt from VBench is ``a cat and a dog''. VLM extends the short prompt into a series of consistent behaviors which result in an interaction between a dog and a cat.}
    \label{fig:prompt_extend_case}
    \vspace{-1.5em}
\end{figure*}
\subsection{Case Study on the RoboticsEvents}\label{sec:robo_case}
In Figure \ref{fig:robotics}, we show a case for precise temporal control of TIE. We adjust the time interval of one event of the right arm. The right arm should reach out to pull the door of the oven. We specify this accurate start time from $2.0s$ to $5.0s$, TIE accurately performs the requirement, while the Finetuned one does not respond in time.
\begin{figure*}[htb]
    \centering
    \hspace{30pt}
    \includegraphics[width=1.0\linewidth]{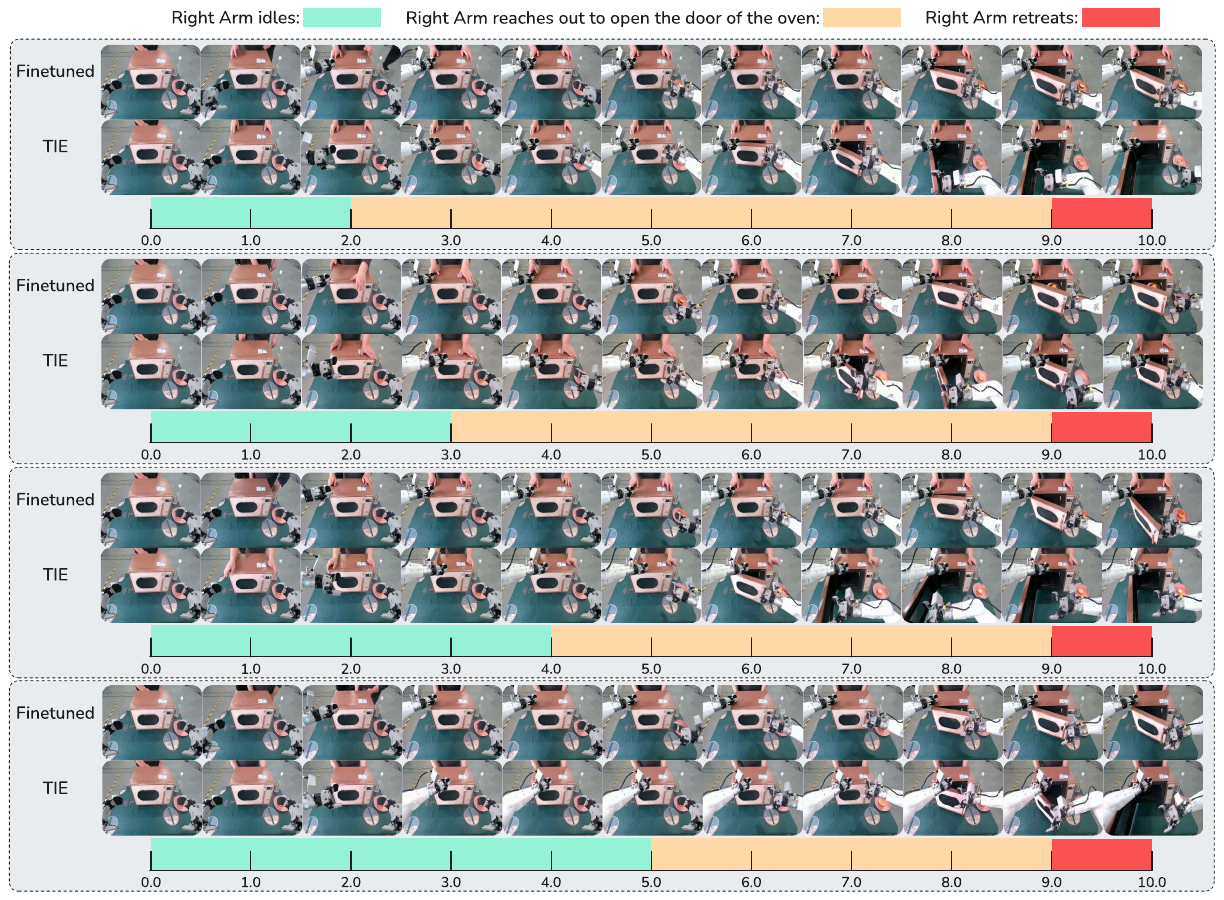}
    \captionsetup{justification=justified, singlelinecheck=true}
    \caption{Generation results on \textit{RoboticsEvents}. We compare the temporal response behavior of the Finetuned baseline and \TIE.}
    
    \label{fig:robotics}
\end{figure*}
\subsection{Boundary Experiments}
\paragraph{Performance for different numbers of entities and events} Evaluation results with different amount of descriptions split can be found in \cref{tab:descsplit}, and results with different amount of entities split can be found in \cref{tab:entsplit}. We can see that with more events and entities, the temporal alignment of \TIE consistently drops, while more descriptions provide more information and increase the video quality. 

\begin{table}[htb]
\centering
\small
\setlength{\tabcolsep}{4pt}
\begin{tabular}{c|c|ccccc}
\hline
Description Count & CLIP-Event $\uparrow$ & VQ $\uparrow$ & TC $\uparrow$  & DD $\uparrow$ & TA $\uparrow$ & FC $\uparrow$\\ \hline
\lbrack 3,4\rbrack &	0.2846& 	2.977 &	2.941 &	2.803 &	2.741 &	2.853\\
\lbrack5,6,7\rbrack &	0.2694& 	2.975 &	2.920& 	2.907 &	2.768 &	2.851\\
\lbrack8,9,10\rbrack& 	0.2394 &	2.965 &	2.900 &	2.943 &	2.822 &	2.837\\
\lbrack11,12,13\rbrack &	0.2267 &	3.057 &	2.987 &	3.072 &	2.948 &	2.934\\
\lbrack14,15,16\rbrack& 	0.2138 &	3.293& 	3.285 &	3.290 &	3.093 &	3.220\\ \hline
\end{tabular}
\caption{\TIE Performance on various description count splits.}
\label{tab:descsplit}
\end{table}
\begin{table}[htb]
\centering
\small
\setlength{\tabcolsep}{4pt}
\begin{tabular}{c|c|ccccc}
\hline
Entity Count & CLIP-Event $\uparrow$ & VQ $\uparrow$ & TC $\uparrow$  & DD $\uparrow$ & TA $\uparrow$ & FC $\uparrow$\\ \hline
1 &	0.2801& 	2.942 &	2.887 	&2.829 	&2.711 &	2.811\\
2 &	0.2400 &	2.971 &	2.897 &	2.949 &	2.837 &	2.821\\
3 &	0.2221 &	3.238 &	3.212 &	3.209 &	3.047 &	3.161\\
4 &	0.2140 &	3.176 &	3.157 &	3.140 	&3.109& 	3.120\\
5 &	0.2150 &	3.310 &	3.338 &	3.299 &	3.143 &	3.289\\
6 &	0.1977 &	3.299 &	3.308 &	3.371 &	3.176 &	3.241\\
7 &	0.2089& 	3.501 &	3.542 &	3.658 &	3.223 	&3.503\\ \hline
\end{tabular}
\caption{\TIE Performance on various entity count splits.}
\label{tab:entsplit}
\end{table}
\subsection{Experiment Settings}
During all experiments, we use 32 H800 GPUs for training. All 720p models (on Pexels-250k Dataset) are trained in batch size $512$ for 6,500 steps with a learning rate of $10^{-5}$ and an AdamW \cite{Kingma2014AdamAM,Loshchilov2017FixingWD} optimizer. All inferences are performed with $50$ steps, with a CFG scale of $5.0$. All 480p models (on domain-specific datasets) are trained in batch size $128$ for 20,000 steps with a learning rate of $3\times 10^{-5}$.

\section{Discussions}
\subsection{Limitations}
Modern VLMs still may not be capable enough to generate such a heavy prompt. Since captioning events requires understanding high-FPS videos and long contexts. VLM could generally output hallucinated contents, mismatched entities and behaviors, inconsistent entities and many other kinds of factual failures. Reflective Agent can somehow refine these problems but for modern VLMs that require API calls, retrying for too many times could be expensive.
\subsection{Future Work}
We believe \TIE can be naturally integrated with long-video generation paradigms, such as next-chunk generation with KV-Cache, resulting in the next-level AIGC visual content creation suite. And also there could be natural applications for data generation / enrichment like the embodiment-AI scenario and digital human. We are on the way to explore the new frontier of interactive, customized and content-rich visual immersion / creation.
}{}
\iftoggle{isSupplementary}{
    \section{Supplementaries}
}{}
\end{document}